\documentclass[accepted]{uai2026} 

\usepackage[american]{babel}

\usepackage{natbib}

\usepackage{xurl}
\usepackage{mathtools}
\usepackage{amssymb,amsthm}
\usepackage{booktabs}
\usepackage{graphicx}
\usepackage{algorithm}
\usepackage{algorithmic}
\usepackage{tikz}
\usepackage[font=small]{caption}
\usetikzlibrary{arrows.meta,positioning,decorations.pathreplacing}

\definecolor{safegrn}{HTML}{27AE60}
\definecolor{badred}{HTML}{E74C3C}
\definecolor{missorg}{HTML}{F39C12}
\definecolor{selblue}{HTML}{003E74}
\definecolor{ltgrn}{rgb}{0.88,1.0,0.88}
\definecolor{ltred}{rgb}{1.0,0.88,0.88}
\definecolor{ltblu}{rgb}{0.88,0.94,1.0}
\definecolor{ltbg}{HTML}{F5F5F5}
\colorlet{MAGENTA}{magenta}
\newcommand{\okcov}[1]{\mathbf{#1}}
\newcommand{\bestw}[1]{\mathbf{#1}}
\newcommand{\secondw}[1]{\underline{#1}}

\title{Partial Causal Structure Learning for Valid Selective Conformal Inference under Interventions}

\author[1]{\href{mailto:amir.asiaeetaheri@vumc.org?Subject=Your UAI 2026 paper}{Amir~Asiaee}{}}
\author[2]{Kaveh Aryan}
\author[3]{James P.\ Long}
\affil[1]{Department of Biostatistics, Vanderbilt University Medical Center, TN 37203, USA}
\affil[2]{Department of Informatics, King's College London, London, UK}
\affil[3]{Department of Biostatistics, MD Anderson Cancer Center, Houston, TX, USA}

\date{}

\newtheorem{theorem}{Theorem}

\newtheorem{proposition}{Proposition}
\newtheorem{corollary}{Corollary}
\theoremstyle{definition}
\newtheorem{definition}{Definition}
\newtheorem{assumption}{Assumption}
\theoremstyle{remark}
\newtheorem{remark}{Remark}
\newtheorem{example}{Example}

\newcommand{\E}{\mathbb{E}}
\newcommand{\Pp}{\mathbb{P}}
\newcommand{\R}{\mathbb{R}}

\newcommand{\1}{\mathbf{1}}

\newcommand{\calA}{\mathcal{A}}
\newcommand{\calS}{\mathcal{S}}
\newcommand{\calU}{\mathcal{U}}
\newcommand{\calB}{\mathcal{B}}
\newcommand{\calL}{\mathcal{L}}
\newcommand{\calP}{\mathcal{P}}
\newcommand{\desc}{\mathrm{desc}}
\newcommand{\Cand}{\mathrm{Cand}}
\newcommand{\FPR}{\mathrm{FPR}}
\newcommand{\FNR}{\mathrm{FNR}}

\begin{document}
\maketitle


\begin{abstract}
Selective conformal prediction can yield substantially tighter uncertainty sets when we can identify calibration examples that are exchangeable with the test example.
In interventional settings, such as perturbation experiments in genomics, exchangeability often holds only within subsets of interventions that leave a target variable ``unaffected'' (e.g., non-descendants of an intervened node in a causal graph).
{We study the practical regime where this invariance structure is unknown and must be estimated from data. Our main result quantifies how coverage degrades when the estimated safe calibration set accidentally includes interventions that affect the target, and gives a conservative correction when an upper bound on this error is available. Rather than learning a full causal graph, we learn only the intervention--target relationships needed to choose calibration interventions. We give algorithms for this partial learning task and evaluate them on synthetic structural equation models and Replogle K562 CRISPR-interference data, where the experiments illustrate synthetic gains from selective calibration and finite-sample tradeoffs on real perturbation screens.}
\end{abstract}

\section{Introduction}\label{sec:intro}

Conformal prediction (CP) provides distribution-free uncertainty quantification: given any black-box predictor, CP constructs prediction sets with finite-sample marginal coverage guarantees under exchangeability \citep{vovk2005algorithmic,shafer2008tutorial,angelopoulos2023gentle}.
Split conformal prediction, in particular, requires only a single pass over a held-out calibration set and scales to modern high-dimensional problems.

However, the marginal coverage guarantee of standard CP can be loose when the data exhibit heterogeneity.
In many scientific applications, data are generated under multiple interventions or environments, and exchangeability holds \emph{only within} certain subsets of these conditions.
If we can identify such subsets, \emph{selective} (or \emph{Mondrian}) calibration \citep{vovk2012conditional,bostrom2021mondrian} provides the same coverage guarantee but with prediction sets that are often substantially tighter, because calibration is restricted to the relevant stratum.

\textbf{Motivating application: gene perturbation experiments.}
Single-cell perturbation screens such as Perturb-seq \citep{dixit2016perturbseq,replogle2022genomewide} now routinely profile the transcriptional consequences of hundreds to thousands of genetic interventions (e.g., CRISPRi knockdowns) in a single experiment.
A central scientific question is: for a given target gene $i$, which interventions affect its expression and by how much?
Under a causal model, intervening on gene $a$ changes the distribution of gene $i$ if and only if $i$ is a causal descendant of $a$ in the gene regulatory network. 


For non-descendant targets, the residual distribution under intervention $a$ is the same as under control, creating a natural exchangeability structure.
Exploiting this structure via selective conformal inference can provide tighter, more informative prediction intervals for the effects of unseen interventions{,} enabling better prioritization in experimental design and more confident in silico differential expression calls.

The challenge is that the descendant structure is rarely known.
Full causal graph learning in high dimensions is notoriously difficult and computationally expensive \citep{peters2017elements,hauser2012interventional}, and the resulting errors propagate to the selective calibration procedure in ways that are poorly understood.
{We quantify the statistical cost of imperfect causal knowledge for conformal inference and develop algorithms that learn only the partial causal structure needed to choose calibration interventions.}

\textbf{Contributions.}
{Our contributions are:}
\begin{enumerate}[leftmargin=*,itemsep=3pt]
\item {\textbf{Coverage under imperfect stratum selection.}
We prove a finite-sample coverage bound for selective conformal prediction when some selected calibration interventions come from the wrong causal stratum, and we give a conservative level correction when the amount of such error can be bounded.}

\item {\textbf{Partial causal learning for calibration.}
Instead of recovering the full causal graph, we reduce the learning problem to deciding which calibration interventions can safely be used for each target. This makes the relevant error asymmetric: discarding a safe calibration point mainly costs efficiency, while admitting an intervention that affects the target can hurt coverage.}

\item {\textbf{Algorithms and empirical validation.}
We give a descendant-discovery procedure based on intersections of differentially affected sets and a local invariant-causal-prediction procedure for estimating distance to an intervention, with recovery conditions for both. On synthetic linear SEMs, coverage degrades monotonically as the selected calibration set is contaminated, and {the corrected procedure restores conservative coverage when supplied with the realized contamination level}.}
{On Replogle K562 CRISPRi data, the corrected method exceeds nominal coverage among the causal-selector methods on its feasible cases, while a weighted CP baseline using control-cell coexpression provides a conservative full-feasibility comparison.}
\end{enumerate}

\textbf{Organization.}
Section~\ref{sec:related} discusses related work.
Section~\ref{sec:setup} formalizes the setup.
{{Section~\ref{sec:taskdriven} formulates the partial causal learning objective.}}
{Section~\ref{sec:delta} presents the $\delta$-robustness theorem.}
Section~\ref{sec:algorithms} describes the algorithms.
Section~\ref{sec:recovery} states recovery conditions.
Section~\ref{sec:experiments} details the experimental evaluation.
Section~\ref{sec:discussion} concludes.
All proofs are deferred to the Supplementary Material.

\section{Related Work}\label{sec:related}

\textbf{Conformal prediction: foundations and efficiency.}
Conformal prediction originated in the algorithmic learning theory literature \citep{vovk2005algorithmic} and has become a standard approach to distribution-free uncertainty quantification \citep{shafer2008tutorial,angelopoulos2023gentle}.
Conformalized quantile regression \citep{romano2019cqr} improves efficiency by adapting interval widths to the input, while maintaining the finite-sample coverage guarantee.

\textbf{Conditional and Mondrian conformal prediction.}
Standard CP provides only marginal coverage; conditional coverage (coverage conditional on $X$) is generally impossible without strong assumptions \citep{vovk2012conditional}.
Mondrian conformal prediction \citep{bostrom2021mondrian} provides a practical middle ground: it stratifies calibration points into categories and applies CP within each stratum, achieving stratum-conditional coverage.
\citet{gibbs2023conditional} develop a framework interpolating between marginal and conditional validity, showing how to achieve coverage guarantees over finite collections of subgroups.
Our selective calibration is Mondrian-like, with strata defined by causal invariances rather than observable features.

\textbf{Conformal prediction under distribution shift.}
When exchangeability fails, weighted conformal prediction can restore validity under covariate shift if the density ratio is known \citep{tibshirani2019covshift}.
\citet{barber2023beyond} develop conformal prediction beyond exchangeability, providing a general framework for weighted and non-exchangeable settings with explicit finite-sample coverage bounds.
The contamination model we study is complementary: rather than a continuous distributional shift, we analyze the discrete setting where a fraction of calibration points come from the wrong stratum.

\textbf{Robust conformal prediction under contamination.}
\citet{einbinder2024labelnoise} study label noise robustness, showing that conformal prediction is conservative when noise is dispersive and providing corrections for adversarial noise of bounded size.
\citet{clarkson2024contamination} study split conformal prediction under a Huber contamination model, providing coverage bounds in terms of the contamination fraction and the Kolmogorov--Smirnov distance.
\citet{gendler2022adversarial} address adversarial input perturbations via randomized smoothing.
Our Theorem~\ref{thm:delta} is tailored to the \emph{selective} (Mondrian) setting where contamination arises from misclassification of calibration strata, which is the natural error mode when causal structure is learned.
The key distinction is that we bound coverage loss as a function of the fraction of miscategorized calibration points, without assumptions on the contaminating distribution.

\textbf{Conformal prediction and causal inference.}
\citet{lei2021counterfactuals} develop conformal inference of counterfactuals and individual treatment effects in the potential outcomes framework, providing finite-sample coverage under unconfoundedness.
\citet{alaa2023metalearners} extend this to conformal meta-learners for predictive inference of individualized treatment effects.
By contrast, we use causal structure to identify approximately exchangeable calibration subsets across \emph{multiple simultaneous interventions}, enabling tighter predictive intervals for post-intervention outcomes rather than performing inference on a treatment effect.

\textbf{Causal discovery from interventional data.}
\citet{hauser2012interventional,hauser2015joint} characterize interventional Markov equivalence classes and develop greedy algorithms for learning directed acyclic graphs (DAGs) from mixed data.
\citet{eberhardt2005number} establish that $\lceil\log_2(N)\rceil + 1$ experiments suffice to identify the full causal graph over $N$ variables in the worst case, motivating efficient experimental design.
\citet{squires2020active} propose active structure learning via directed clique trees, achieving near-optimal intervention counts for full graph identification.
\citet{brouillard2020differentiable} develop differentiable causal discovery methods that scale to larger graphs using interventional data.
These works aim to recover the complete DAG; {our approach instead learns the partial structure needed for selective conformal calibration}.

\textbf{Invariant causal prediction.}
Invariant causal prediction (ICP) \citep{peters2016icp} identifies parents of a target by exploiting the stability of the conditional distribution $P(Y_i\mid X_{\mathrm{Pa}(i)})$ across environments.
Nonlinear extensions have been developed by \citet{heinze2018nonlinearicp}.
We adapt ICP ideas locally to estimate parent sets and thereby distance-to-intervention, without attempting full graph recovery.

\textbf{Perturbation biology and causal gene networks.}
Single-cell perturbation screens provide a natural setting for interventional causal inference.
Perturb-seq \citep{dixit2016perturbseq} combines CRISPR perturbations with single-cell RNA-seq readouts.
\citet{replogle2022genomewide} scaled this to genome-wide screens.
\citet{peidli2024scperturb} provide harmonized perturbation datasets across studies.
CausalBench \citep{chevalley2023causalbench} benchmarks gene network inference from perturbation data.
Differential expression testing in perturbation screens raises statistical challenges, including the need for appropriate aggregation and multiple testing correction \citep{squair2021confronting,barry2024sceptre}.
Differentially expressed gene (DEG) sets from perturbation screens serve as the inputs for our descendant discovery algorithm (Algorithm~\ref{alg:descendant}).
{\citet{zhu2025auprc} argue that in-silico perturbation models should be evaluated not only by global response accuracy but also by their ability to identify differentially expressed genes, proposing AUPRC as a biologically relevant metric for DE-gene recovery. This motivates uncertainty-quantified post-intervention predictions, since downstream biological interpretation often depends on confidence in DE calls rather than on point predictions alone.}

\section{Interventional Prediction and Selective Conformal}\label{sec:setup}

We consider a collection of {observed} interventions (environments) indexed by $a\in\calA$.
{For split conformal prediction, we partition these observed interventions into disjoint training and calibration sets, $\calA=\calA_{\mathrm{train}}\cup\calA_{\mathrm{cal}}$ and $\calA_{\mathrm{train}}\cap\calA_{\mathrm{cal}}=\emptyset$.}
For each {observed} intervention $a$ we observe i.i.d.\ samples of outcomes $Y^{(a)}\in\R^p$ (e.g., gene expression vector) and possibly covariates $X^{(a)}$.
Typically $X$ represents shared sample-level information that does not depend on $a$ (e.g., cell type, batch indicator, or technical covariates), though in some designs $X^{(a)}$ may include intervention-specific features such as dosage or timing.
We focus on predicting a scalar target component $Y_i^{(a)}$ (or a derived effect size) for each $i\in[p]$, e.g., the expression of a single gene $i$ under intervention $a$.

Let $\hat f_i(a,X)$ be a fitted predictor of $Y_i$ in intervention $a${, learned by an arbitrary predictor-learning algorithm $\calL_i$ from the training interventions $\calA_{\mathrm{train}}$}.
We use a calibration score (nonconformity score)
\[
R_i^{(a)} = S_i\big(X^{(a)}, Y_i^{(a)}\big) \equiv \big|Y_i^{(a)}-\hat f_i(a,X^{(a)})\big|.
\]
{Calibration scores are computed only for interventions $a\in\calA_{\mathrm{cal}}$. The test intervention satisfies $a^\star\notin\calA$ and plays the same role as a new test point in ordinary split conformal prediction.}
{For example, if $\calA_{\mathrm{train}}=\{a_1,a_2\}$, $\calA_{\mathrm{cal}}=\{a_3,a_4,a_5\}$, and $a^\star=a_6$, then $\hat f_i$ is learned using $a_1,a_2$, conformal scores are computed only from $a_3,a_4,a_5$, and $a_6$ is the held-out test intervention.}

{\textbf{Running genomics example.} In a single-cell perturbation screen, an intervention label $a$ denotes a perturbation condition, such as CRISPR interference (CRISPRi) knockdown, CRISPR activation (CRISPRa), or CRISPR-Cas9 knockout of a gene; the perturbation modality can be included as part of the label $a$. For target gene $i$, $Y_i^{(a)}$ is the expression of gene $i$ measured after applying intervention $a$, and $X^{(a)}$ contains sample-level covariates such as cell type, batch, donor, time point, or dose. The prediction model $\hat f_i(a,x)$ answers the question: if we apply intervention $a$ in context $x$, what expression should we expect for gene $i$? It is learned from prior interventional data in $\calA_{\mathrm{train}}$, while $\calA_{\mathrm{cal}}$ is reserved for estimating the residual scale $R_i^{(a)}=|Y_i^{(a)}-\hat f_i(a,X^{(a)})|$. The new intervention $a^\star$ is then a held-out perturbation condition, for example a CRISPRi knockdown not used for either training or calibration. If we fix a cell type or context $X=c$, the notation reduces to asking how the predicted expression of gene $i$ changes as only the perturbation label $a$ varies. Our theory and experiments treat $a$ as a single intervention condition; extending the descendant-learning and coverage analysis to combinatorial perturbations, such as knockdown of two genes, is left for future work.}

\subsection{Selective invariance sets}
Suppose that for each intervention $a{\in\calA\cup\{a^\star\}}$ and target $i$ there exists an unknown indicator
\(
Z_{a,i} = \1\{i \in \desc(a)\},
\)
where $\desc(a)$ denotes causal descendants of $a$ in an underlying causal graph $G=(V,E)$. We assume $a \in \desc(a)$, i.e., $Z_{a,a}=1$. 
We assume a single graph $G$ shared across all samples; extending to covariate-dependent graphs (e.g., cell-type-specific regulatory networks) is an important direction for future work.
Define the \emph{safe calibration set} for target $i$ as $\calA^\star(i)=\{a\in{\calA_{\mathrm{cal}}}: Z_{a,i}=0\}$ (i.e., {calibration} interventions that do not affect $i$).

\begin{assumption}[Selective exchangeability]\label{ass:selective_exchange}
Fix a target gene $i$ and a test intervention $a^\star$ with $Z_{a^\star,i}=0$ (i.e., $i$ is unaffected by $a^\star$).
The calibration scores
$\{R_i^{(a)}:a\in\calA^\star(i)\}\cup\{R_i^{(a^\star)}\}$ are exchangeable.
\end{assumption}

If $Z_{a^\star,i}=0$, we calibrate using interventions where $i$ is unaffected, yielding valid, typically tighter intervals than a pooled baseline mixing {interventions that do and do not affect target $i$}.
{Pooling is justified only for a coarser mixture-level target, e.g., when the test intervention is drawn exchangeably from the same mixture, or when pooled scores are otherwise exchangeable.}
When $Z_{a^\star,i}=1$, selective calibration on the {safe} set does not apply; {one needs an exchangeable affected stratum, or else pooled CP requires an explicit mixture-exchangeability assumption.}
\textbf{{Our framework focuses on $Z_{a^\star,i}=0$, where selective calibration offers the clearest gains; descendant-target strata are left for future work.}}

\begin{example}[Why selective calibration tightens intervals]
Consider a gene regulatory network with 100 genes and 50 interventions.
For a target gene $i$, suppose only 5 of the 50 interventions are ancestors of $i$ (i.e., affect $i$).
Pooled conformal prediction uses all 50 residuals, mixing 5 {residuals from interventions that affect $i$} with 45 {residuals from safe interventions.}
Selective conformal uses only the 45 {safe} residuals.
{When the residuals from interventions that affect $i$ are larger, as can occur for predictors that do not fully model intervention effects,} they inflate the pooled quantile; the selective interval is then tighter while maintaining exact $1-\alpha$ coverage.
\end{example}

\subsection{Selective split conformal interval (oracle)}
Given a set of calibration scores $\{R_i^{(a)}:a\in\calA^\star(i)\}$ of size $m$, define the oracle conformal quantile
\(
k=\left\lceil (m+1)(1-\alpha)\right\rceil
\) as: 
\begin{align*}
\hat q^{\mathrm{oracle}}_{i}(1-\alpha)
= \text{the $k$-th smallest value in } \\
\{R_i^{(a)}:a\in\calA^\star(i)\}\cup\{\infty\}.
\end{align*}
The prediction interval is
\[
\begin{aligned}
C^{\mathrm{oracle}}_{i,1-\alpha}(a^\star,x)
&=[\,\hat f_i(a^\star,x)-\hat q^{\mathrm{oracle}}_{i}(1-\alpha),\\
&\qquad \hat f_i(a^\star,x)+\hat q^{\mathrm{oracle}}_{i}(1-\alpha)\,].
\end{aligned}
\]
Under Assumption~\ref{ass:selective_exchange}, this interval achieves $1-\alpha$ marginal coverage by the standard conformal argument \citep{shafer2008tutorial}.

\section{{Partial Causal Learning for Calibration}}\label{sec:taskdriven}

{The oracle selective interval of Section~\ref{sec:setup} assumes the exchangeable set $\calA^\star(i)$ is known. In practice it is unknown and must be estimated. Rather than learn the full causal graph, we estimate only \textit{the binary descendant indicators} needed to select calibration interventions, and keep the resulting selection error small. Section~\ref{sec:delta} then quantifies exactly how that error translates into a coverage cost, closing the loop.}

\subsection{Binary classification view}
For each intervention $a$ and target $i$, we estimate $\widehat Z_{a,i} \approx Z_{a,i}=\1\{i\in\desc(a)\}$.
We then select calibration interventions for $(i,a^\star)$ as $\widehat{\calA}(i,a^\star) = \{a\in\calA_{\mathrm{cal}}:\ \widehat Z_{a,i}=0\}$, possibly intersected with {design strata within which exchangeability is more plausible} (for example, the {same cell type, experimental context, or batch}).

\subsection{From classifier errors to contamination}
Let $\pi_0$ be the fraction of truly {safe} interventions in $\calA_{\mathrm{cal}}$ for target $i${, meaning interventions that do not affect $i$}.
We use the \emph{descendant-positive} convention throughout, so a ``positive'' is {a pair where intervention $a$ affects target $i$}, i.e., $Z_{a,i}=\1\{i\in\desc(a)\}=1$. Let $\FNR_i=\Pp(\widehat Z_{a,i}=0\mid Z_{a,i}=1)$ be the fraction of {calibration interventions that truly affect $i$ but are misclassified as safe} (the error that \emph{contaminates} the selected calibration set), and let $\FPR_i=\Pp(\widehat Z_{a,i}=1\mid Z_{a,i}=0)$ be the fraction of truly {safe} interventions misclassified as {affecting $i$}, which only reduces the calibration set size. Then the contamination fraction $\delta$ (the share of the selected set that is not exchangeable, made precise in Section~\ref{sec:delta}) satisfies
\[
\delta = \frac{(1-\pi_0)\FNR_i}{\pi_0(1-\FPR_i)+(1-\pi_0)\FNR_i}.
\]
This expression shows that \emph{controlling the false-negative rate $\FNR_i$ is critical}: a missed descendant admits {a non-exchangeable score from an intervention that affects $i$} into the calibration set. A classifier that conservatively labels borderline cases as {``affecting $i$''} (accepting higher $\FPR_i$, which only reduces the calibration set size) is preferable to one that aggressively labels cases as {safe} (low $\FPR_i$ but higher $\FNR_i$, which increases contamination).
{For the hard safe-set selector above, this contamination fraction is primarily target-specific, so one may write $\delta_i$; we keep the notation $\delta$ after fixing the query. It becomes genuinely $(i,a^\star)$-specific when the selected calibration set also depends on the test intervention, for example through same-context restrictions or distance-based weighting.}

\begin{remark}[Asymmetric error costs]
In sparse networks, most interventions do not affect a given target ($\pi_0$ is large), so even a moderate {$\FNR_i$} can lead to small $\delta$.
Indeed, when $\pi_0$ is sufficiently large, a naive strategy of labeling \emph{all} calibration interventions as {safe} (i.e., {$\FNR_i=1$, $\FPR_i=0$}) yields $\delta=(1-\pi_0)/1 = 1-\pi_0$; for example, with $\pi_0\ge 0.95$ this gives $\delta\le 0.05$, requiring no learning algorithm at all.
Algorithm~\ref{alg:descendant} improves upon this baseline by further reducing $\delta$ while maintaining calibration set size.
However, $\delta$ grows rapidly with {$\FNR_i$} when $\pi_0$ is small (dense networks or hub genes).
{This partial-label formulation makes this tradeoff explicit and motivates conservative classification strategies.}
\end{remark}

\subsection{Reduction in complexity}
Full causal graph learning over $p$ variables requires estimating $O(p^2)$ edge parameters (or $O(2^p)$ DAG structures in the worst case).
{This objective reduces full graph learning to estimating $|\calA|\times p$ binary labels, and in practice only a small fraction of these are needed (those for which the corresponding $(a^\star,i)$ pairs will be queried at test time).}
Furthermore, the labels $Z_{a,i}$ have combinatorial structure that can be exploited: if $a\to b\to c$ in $G$, then $Z_{a,c}=1$ and $Z_{b,c}=1$, enabling information sharing across interventions.

\section{\texorpdfstring{$\delta$}{delta}-Robustness: Coverage under Contaminated Selective Calibration}\label{sec:delta}

{{The partial-label selector of Section~\ref{sec:taskdriven} produces the estimated calibration set} $\widehat{\calA}(i,a^\star)=\{a\in\calA_{\mathrm{cal}}:\widehat Z_{a,i}=0\}$ (recall {$\calA_{\mathrm{cal}}$ is the held-out calibration split, with $a^\star\notin\calA_{\mathrm{cal}}$}). Because $\widehat Z$ is imperfect, some selected interventions violate exchangeability ({they truly affect target $i$ but are misclassified as safe}), contaminating the calibration scores. We now quantify the coverage cost of this contamination as a function of its magnitude.}

\begin{definition}[Contamination fraction]
Let $\calA^\star=\calA^\star(i)$ be the (unknown) set of truly exchangeable {safe} calibration interventions for target $i$.
Define the contamination fraction
\[
\delta \equiv \frac{|\widehat{\calA}(i,a^\star)\setminus \calA^\star|}{|\widehat{\calA}(i,a^\star)|}.
\]
That is, $\delta$ is the fraction of calibration interventions in the selected set that actually {affect target $i$} (i.e., $Z_{a,i}=1$) and therefore not exchangeable with the test score.
\end{definition}

The selected set $\widehat{\calA}$ naturally partitions into three regions (Figure~\ref{fig:sets}):
\emph{good} interventions $\widehat{\calA}\cap\calA^\star$ ($m$ truly safe and exchangeable with $a^\star$);
\emph{bad} interventions $\widehat{\calA}\setminus\calA^\star$ ($n-m$ contaminants that break exchangeability);
and \emph{missed} interventions $\calA^\star\setminus\widehat{\calA}$ (truly safe but not selected{:} wasted data that reduces calibration set size but does not harm coverage).
The contamination fraction is $\delta=(n-m)/n$.

\begin{figure*}[t]
\centering
\begin{minipage}[t]{0.48\textwidth}
\centering
\textbf{(a)} Partition of calibration interventions\\[6pt]
\begin{tikzpicture}[scale=0.56, every node/.style={font=\scriptsize}]
  \draw[thick, rounded corners=2pt, fill=ltbg]
    (-0.2,-1.7) rectangle (7.2,2.9)
    node[above left, font=\scriptsize\bfseries] {$\calA_{\mathrm{cal}}$};

  \fill[safegrn, fill opacity=0.20]
    (2.1,0.5) ellipse (2.1cm and 1.5cm);
  \draw[safegrn, thick]
    (2.1,0.5) ellipse (2.1cm and 1.5cm);
  \node[safegrn, font=\scriptsize\bfseries] at (0.5,1.85)
    {$\calA^\star\!(i)$};

  \fill[selblue, fill opacity=0.15]
    (4.5,0.5) ellipse (2.1cm and 1.5cm);
  \draw[selblue, thick]
    (4.5,0.5) ellipse (2.1cm and 1.5cm);
  \node[selblue, font=\scriptsize\bfseries] at (6.0,1.85)
    {$\widehat{\calA}$};

  \node[safegrn, font=\scriptsize, align=center] at (3.3,0.8)
    {\textbf{Good}\\[-1pt]($m$)};
  \node[safegrn, font=\tiny] at (3.3,0.0) {exchangeable};

  \node[badred, font=\scriptsize, align=center] at (5.7,0.5)
    {\textbf{Bad}\\[-1pt]($n\!-\!m$)};

  \node[missorg, font=\scriptsize] at (0.7,0.5) {\textbf{Missed}};
  \node[missorg, font=\tiny] at (0.7,-0.15) {{FP}};

  \node[circle, draw=badred, very thick, fill=ltred,
        minimum size=5mm, font=\scriptsize\bfseries]
    (astar) at (8.3,0.5) {$a^\star$};
  \node[font=\tiny, below=1pt of astar] {test};

  \draw[decorate, decoration={brace, amplitude=3pt, mirror}]
    (3.9,-1.4) -- (6.5,-1.4)
    node[midway, below=4pt, font=\scriptsize]
    {$\delta = \tfrac{n-m}{n}$};
\end{tikzpicture}
\end{minipage}\hfill
\begin{minipage}[t]{0.48\textwidth}
\centering
\textbf{(b)} Concrete example on a causal DAG\\[6pt]
\begin{tikzpicture}[
    gene/.style={circle, draw, thick, minimum size=5.5mm,
                 font=\scriptsize\bfseries, inner sep=0pt},
    act/.style={-{Stealth[length=1.5mm]}, thick},
    node distance=8mm
  ]
  \node[gene, fill=ltbg]      (a1) at (0,2.0)   {$a_1$};
  \node[gene, fill=ltgrn]     (a2) at (1.2,2.0) {$a_2$};
  \node[gene, fill=ltgrn]     (a3) at (2.4,2.0) {$a_3$};
  \node[gene, fill=ltred]     (a4) at (3.6,2.0) {$a_4$};
  \node[gene, fill=orange!25, draw=missorg, thick]
                               (a5) at (4.8,2.0) {$a_5$};

  \node[gene, fill=ltbg]      (g) at (0.6,0.8) {$g$};
  \node[gene, fill=ltbg]      (h) at (3.0,0.8) {$h$};
  \node[gene, fill=ltbg]      (k) at (4.8,0.8) {$k$};

  \node[gene, fill=ltblu, very thick, draw=selblue]
    (i) at (1.8,-0.3) {$i$};

  \node[gene, fill=ltred!50, draw=badred, very thick]
    (as) at (4.8,-0.3) {$a^\star$};

  \draw[act] (a1) -- (g);
  \draw[act] (g)  -- (i);
  \draw[act] (a2) -- (h);
  \draw[act] (a3) -- (h);
  \draw[act] (a4) -- (g);
  \draw[act] (a4) -- (h);
  \draw[act] (a5) -- (k);
  \draw[act] (as) -- (k);

  \node[font=\tiny, selblue, below=1pt of i] {target};
  \node[font=\tiny, badred, below=1pt of as] {test};
\end{tikzpicture}
\end{minipage}
\caption{{Intervention-set geometry for target $i$.
\textbf{(a)} Within the calibration split, $\calA^\star(i)$ is the truly safe set and $\widehat{\calA}$ is the selected set; their overlap gives good scores, contaminants, and missed safe scores.
Contaminants drive $\delta=(n-m)/n$ and reduce coverage, while missed safe scores only reduce calibration size.
\textbf{(b)} Example where $\widehat{\calA}=\{a_2,a_3,a_4\}$: $a_2,a_3$ are good, $a_4$ is a contaminant, and $a_5$ is missed.}}
\label{fig:sets}
\end{figure*}

\subsection{Main robustness theorem}
We prove a finite-sample coverage lower bound that depends explicitly on $\delta$ and does \emph{not} assume anything about the score distribution on contaminated interventions.

\begin{theorem}[$\delta$-robust selective conformal coverage]\label{thm:delta}
Fix $(i,a^\star)$ with $Z_{a^\star,i}=0$, and suppose that scores from the ``good'' set
$\calA^\star$ are exchangeable with the test score $R_i^{(a^\star)}$ (Assumption~\ref{ass:selective_exchange}).
Let $\widehat{\calA}$ be any selected calibration set of size $n=|\widehat{\calA}|$, and let $m=|\widehat{\calA}\cap\calA^\star|$ be the number of good calibration interventions.
Construct the split conformal interval using all scores in $\widehat{\calA}$:
let $\hat q$ be the $\lceil (n+1)(1-\alpha)\rceil$-th order statistic of $\{R_i^{(a)}:a\in\widehat{\calA}\}{\cup\{\infty\}}$ and output $C_{i,1-\alpha}=[\hat f_i\pm \hat q]$.
Then, for all possible distributions of contaminated scores,
\[
\Pp\big(R_i^{(a^\star)} \le \hat q\big)
\ \ge\
1-\alpha
-\frac{n-m}{m+1}.
\]
In terms of the contamination fraction $\delta=(n-m)/n$,
\[
\begin{aligned}
\Pp\big(Y_i^{(a^\star)} \in C_{i,1-\alpha}\big)
&\ge 1-\alpha - g(\delta,n),
\end{aligned}
\]where $g(\delta,n) \equiv \frac{\delta n}{(1-\delta)n+1}$ \& 
$\lim_{n\to\infty} g(\delta,n)\approx \delta/(1-\delta)$.
\end{theorem}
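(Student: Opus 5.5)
Our approach is to compare $\hat q$ with a quantile computed from the $m$ good scores alone, because those scores are exchangeable with the test score. Let $k=\lceil (n+1)(1-\alpha)\rceil$. Write $G=\{R_i^{(a)}:a\in\widehat{\calA}\cap\calA^\star\}$ for the $m$ good scores and $B$ for the $n-m$ contaminated scores. If $k>n$ then $\hat q=\infty$ and coverage is trivial. Otherwise $\hat q$ is the $k$-th smallest of $G\cup B$.

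\textbf{Step 1 (deterministic sandwich).} Removing the $n-m$ contaminated scores can lower a rank by at most $n-m$. So at least $k-(n-m)$ good scores are $\le \hat q$, which gives
\[
\hat q \;\ge\; G_{(k-(n-m))},
\]
the $(k-(n-m))$-th order statistic of $G$. If $k-(n-m)\le 0$, take this to be $-\infty$; the resulting bound is then vacuous but still correct, as shown in Step 3. The inequality holds pointwise for every realization of the contaminated scores, so no distributional assumption on $B$ is needed. Consequently
\[
\{R_i^{(a^\star)}\le G_{(k-(n-m))}\}\subseteq\{R_i^{(a^\star)}\le \hat q\}.
\]

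\textbf{Step 2 (exchangeable rank bound).} By Assumption~\ref{ass:selective_exchange}, the $m+1$ scores $G\cup\{R_i^{(a^\star)}\}$ are exchangeable. The selected good set is a subset of $\calA^\star$, and exchangeability passes to subsets. Here I will treat $\widehat{\calA}$ as fixed, or as chosen independently of the calibration scores, e.g.\ from separate data. By the standard rank argument, the test score's rank among these $m+1$ scores is at most uniform when there are ties. Hence for any integer $j\le m$,
\[
\Pp\big(R_i^{(a^\star)}\le G_{(j)}\big)\ \ge\ \frac{j}{m+1}.
\]

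\textbf{Step 3 (arithmetic).} Set $j=k-(n-m)$ and use $k\ge (n+1)(1-\alpha)$:
\[
\frac{j}{m+1}\ \ge\ \frac{(n+1)(1-\alpha)-(n-m)}{m+1}.
\]
It remains to show this is at least $1-\alpha-\frac{n-m}{m+1}$. This reduces to
\[
(n+1)(1-\alpha)\ \ge\ (m+1)(1-\alpha),
\]
which holds because $n\ge m$. If $j\le 0$, the claimed lower bound $1-\alpha-\frac{n-m}{m+1}$ is $\le 0$: indeed $(n+1)(1-\alpha)\le n-m$ implies $(m+1)(1-\alpha)\le n-m$. So the bound holds trivially in that case too. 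If $j>m$, the event in Step 2 is void, but then $k>n$ and we are in the $\hat q=\infty$ case.

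The $\delta$-form follows by substituting $m=(1-\delta)n$, which gives $g(\delta,n)=\delta n/((1-\delta)n+1)$. Letting $n\to\infty$ gives $g\to\delta/(1-\delta)$.

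The main obstacle is Step 2's justification when $\widehat{\calA}$ depends on the data. The sandwich in Step 1 is deterministic, but applying the rank bound to the random subset $G$ requires the selection to be independent of the score values, or at least to preserve exchangeability of the good scores with the test score. I would state this explicitly as an assumption: the selector $\widehat Z$ is learned from data disjoint from the calibration residuals, as the DEG-based algorithms do. Under that assumption, I would condition on $\widehat{\calA}$ and then apply Step 2.
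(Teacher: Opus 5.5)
Your proposal is correct and follows the paper's proof essentially step for step: the deterministic bound $\hat q \ge G_{(k-(n-m))}$ (which the paper phrases as the adversary placing all bad scores at $-\infty$), the exchangeable rank bound $j/(m+1)$, and the same arithmetic, with your checks of the $j\le 0$ and $j>m$ cases more explicit than the paper's. Your caveat that Step 2 needs $\widehat{\calA}$ to be chosen independently of the calibration and test scores is well taken; the paper's proof uses this implicitly when it invokes Assumption~\ref{ass:selective_exchange} for the selected good scores.
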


\begin{remark}[Interpretation and how to use it]
Theorem~\ref{thm:delta} turns structure learning error into a direct statistical price for selective inference.
If $\delta$ is small, coverage is close to nominal.
{The same bound can be used prospectively: one can design the learning procedure to keep $\delta$ below a threshold $\delta_{\max}$ such that $g(\delta_{\max},n)\le \epsilon$ for a desired coverage slack $\epsilon$.}
{The formal $\alpha$-correction based on an upper bound $\hat\delta$ is stated separately in Corollary~\ref{cor:corrected}.}
\end{remark}

\begin{remark}[Comparison with Huber contamination bounds]
{Unlike the Huber model of \citet{clarkson2024contamination}, where an $\epsilon$-fraction of \emph{all} calibration points is corrupted, here contamination arises from misclassification of the selective/Mondrian stratum: a calibration intervention that truly belongs outside the safe stratum for target $i$ is incorrectly admitted into $\widehat{\calA}$. For the hard safe-set selector, $\delta$ is target-specific (depends on $i$ given the split); it becomes $(i,a^\star)$-specific only if selection also uses test-dependent strata such as cell type, batch, or distance to $a^\star$. Exploiting this selective structure yields a bound tailored to the actual selected calibration set rather than a generic Huber corruption model.}
\end{remark}

\begin{remark}[Benign contamination]
In many perturbation applications, interventions misclassified as {safe} will actually induce \emph{larger} residuals (because the intervention does change the target), making $\hat q$ larger and the interval conservative.
Theorem~\ref{thm:delta} is worst-case over contaminating distributions; empirical coverage is often better than the bound.
{
Corollary~\ref{cor:corrected} below concerns how to correct the nominal conformal level once an upper bound on $\delta$ is available.}
\end{remark}

\begin{corollary}[Corrected selective conformal]\label{cor:corrected}
If $\hat\delta$ is an upper bound on the contamination fraction $\delta$ (possibly estimated or given by the recovery conditions of Propositions~\ref{prop:superset}--\ref{prop:fp}), then running split conformal with nominal level $\alpha' = \alpha - g(\hat\delta,n)$ yields coverage at least $1-\alpha$ for the selective interval (when $\alpha'\le 0$, the interval is $(-\infty,\infty)$ and coverage is trivially $1$).
\end{corollary}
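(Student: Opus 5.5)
The corollary follows by applying Theorem~\ref{thm:delta} at the adjusted level $\alpha'$ instead of $\alpha$, together with monotonicity of $g$ in $\delta$.

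\textbf{Step 1: monotonicity of $g$.} For fixed $n$, write
\[
g(\delta,n)=\frac{\delta n}{(1-\delta)n+1}.
\]
The numerator increases in $\delta$ and the denominator decreases in $\delta$, so $g$ is nondecreasing on $[0,1)$. Hence $\delta\le\hat\delta$ gives $g(\delta,n)\le g(\hat\delta,n)$. The case $\hat\delta\ge 1$ forces $\alpha'\le 0$ and is covered in Step 4.

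\textbf{Step 2: the theorem holds at any level.} Theorem~\ref{thm:delta} is stated for a generic level. Its proof never uses the particular value of $\alpha$ beyond defining the order statistic $\lceil (n+1)(1-\alpha)\rceil$. I will check that the argument goes through for any $\alpha'\in(0,1)$. Recall the argument: the $k$-th order statistic of the $n$ contaminated scores is at least the $(k-(n-m))$-th order statistic of the $m$ good scores. Exchangeability of the $m+1$ good-plus-test scores then gives probability at least $(k-(n-m))/(m+1)$. With $k\ge (n+1)(1-\alpha')$, this is at least $1-\alpha'-(n-m)/(m+1)$.

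\textbf{Step 3: apply at level $\alpha'$.} Suppose $\alpha'\in(0,1)$. Let $\hat q'$ be the $\lceil (n+1)(1-\alpha')\rceil$-th order statistic of $\{R_i^{(a)}:a\in\widehat{\calA}\}\cup\{\infty\}$. Then
\[
\Pp\big(Y_i^{(a^\star)}\in C_{i,1-\alpha'}\big)\ \ge\ 1-\alpha'-g(\delta,n)\ \ge\ 1-\alpha+g(\hat\delta,n)-g(\delta,n)\ \ge\ 1-\alpha .
\]
The last inequality uses Step 1.

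\textbf{Step 4: the case $\alpha'\le 0$.} Here $\lceil (n+1)(1-\alpha')\rceil\ge n+1$, so the selected order statistic is the appended $\infty$. The interval is therefore $(-\infty,\infty)$ and coverage is $1$. The same holds whenever $(n+1)(1-\alpha')>n$, which is consistent with the convention stated in the corollary.

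\textbf{Main obstacle: a data-dependent $\hat\delta$.} The delicate point arises when $\hat\delta$ is estimated rather than given. The argument needs $\delta\le\hat\delta$ to hold deterministically, or at least on the event where coverage is evaluated. It also needs $\hat\delta$ and $\widehat{\calA}$ to be fixed, or independent of the calibration and test scores, so that Theorem~\ref{thm:delta} applies conditionally.

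I would handle this by conditioning on $\widehat{\calA}$ and $\hat\delta$, which are learned from separate data. I would then apply the theorem conditionally and integrate. If $\hat\delta$ is only a high-probability bound, holding with probability $1-\eta$, the same argument yields coverage at least $1-\alpha-\eta$. I would note this as a remark rather than part of the corollary.
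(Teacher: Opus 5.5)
Your proposal is correct and matches the paper's proof: apply Theorem~\ref{thm:delta} at level $\alpha'$, then use that $g(\cdot,n)$ is nondecreasing to get $1-\alpha'-g(\delta,n)\ge 1-\alpha'-g(\hat\delta,n)=1-\alpha$. Your explicit treatment of the $\alpha'\le 0$ case and your conditioning remark are sound additions (the paper gives the high-probability version as Corollary~\ref{cor:hp_corrected}), though your claim that $\hat\delta\ge 1$ forces $\alpha'\le 0$ fails under the literal formula once $\hat\delta\ge 1+1/n$, because the denominator of $g$ becomes nonpositive, so simply replace $\hat\delta$ by $\min\{\hat\delta,1\}$, which is still a valid upper bound.
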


\section{Algorithms}\label{sec:algorithms}

We present two practical estimators from interventional data: one for $\widehat Z_{a,i}$ and one for {a distance-to-intervention score $\widehat d(a,i)$, defined below as an estimated path length from intervention $a$ to target $i$}.
Both are designed for scalability in high dimensions and many interventions.

\subsection{Descendant discovery via perturbation intersection patterns}

The first algorithm estimates descendant sets $\widehat{\desc}(a)$ from differentially affected variable sets across interventions, using set intersections to prune false positives.

\textbf{Input: differentially affected sets.}
Assume we have a differentially affected set $\calS_a\subset[p]$ for each intervention $a\in\calA${,} that is, an estimate of the variables whose distribution changes under $a$.
In genomics these are differentially expressed gene (DEG) sets, computed by standard statistical tests (e.g., two-sample $t$-tests or Wilcoxon rank-sum tests with Benjamini--Hochberg correction \citep{squair2021confronting,barry2024sceptre}); in other domains any test for distributional change applies.
$\calS_a$ is an estimate of $\{i: Z_{a,i}=1\}=\desc(a)$, but may contain false positives and false negatives.

\textbf{Intersection estimator.}
For each intervention $a$, identify interventions that are ``upstream'' of $a$, $\calU(a) = \{b\in\calA : a\in\calS_b\}$ (i.e., interventions $b$ such that $a$ is affected by $b$).
We estimate the descendant set of $a$ by intersecting its own affected set with those of its upstream interventions:
\[
\widehat{\desc}(a)=
\begin{cases}
\calS_a, & \calU(a)=\emptyset,\\[3pt]
\calS_a\cap \bigcap_{b\in\calU(a)} \calS_b, & \text{otherwise.}
\end{cases}
\]
The intuition is: if $b$ is upstream of $a$ (i.e., $a\in\desc(b)$), then every descendant of $a$ is also a descendant of $b$, so $\desc(a)\subseteq\desc(b)$.
Thus descendants of $a$ should appear consistently across $\calS_a$ and across upstream affected sets, while spurious entries will not.
Equivalently, one can view this as starting from $\calS_a$ and pruning any candidate variable that fails to appear in some upstream affected set; we present the set-intersection form for clarity.

\textbf{Complexity.}
Algorithm~\ref{alg:descendant} runs in $O(|\calA|^2 \cdot p)$ time in the worst case (for each of $|\calA|$ interventions, iterating over $O(|\calA|)$ upstream interventions and $O(p)$ candidate genes).
In practice, the estimator is implemented with set operations; since affected sets are typically sparse, the intersections are much cheaper than the worst-case bound.
{The expected cost is $O(|\calA|^2\,\bar s)$, where $\bar s=\E[|\calS_a|]$ is the expected affected-set size; under the Replogle top-$10\%$ rule $\bar s\approx 500$ against $p\approx 5{,}000$, roughly a $10\times$ reduction over the worst case.}

\subsection{Local ICP for distance estimation}\label{sec:alg_icp}

Beyond binary descendant labels, selective inference can benefit from a \emph{distance-to-intervention} notion $\hat d(a,i)$ that prioritizes closer calibration interventions or enables weighted conformal calibration.
We propose a local search inspired by ICP \citep{peters2016icp} that traces back from the target $i$ through estimated parents to obtain a coarse path-length estimate $\widehat d(a,i)$ without learning the full graph.
The full algorithm (Algorithm~\ref{alg:local_icp}) and details on using distance for kernel-weighted calibration are given in the Supplementary Material.
{The parent-pool construction and runtime accounting for Local ICP are deferred to Appendix~\ref{app:alg_icp}.}

\begin{algorithm}[t]
\caption{Descendant Discovery via Differentially Affected Set Intersections}
\label{alg:descendant}
\begin{algorithmic}[1]
\footnotesize
\REQUIRE Differentially affected sets $\{\calS_a\}_{a\in\calA}$, interventions $\calA$
\ENSURE Estimated descendant sets $\{\widehat{\desc}(a)\}_{a\in\calA}$
\FOR{each intervention $a\in\calA$}
  \STATE $\calU(a) \gets \{b\in\calA: a\in\calS_b\}$ \COMMENT{Upstream interventions of $a$}
  \IF{$\calU(a) = \emptyset$}
    \STATE $\Cand(a) \gets \calS_a$ \COMMENT{Fallback to own affected set}
  \ELSE
    \STATE $\Cand(a) \gets \calS_a \cap \bigcap_{b\in\calU(a)} \calS_b$ \COMMENT{Intersection with upstream affected sets}
  \ENDIF
  \STATE $\widehat{\desc}(a) \gets \Cand(a)$
\ENDFOR
\RETURN $\{\widehat{\desc}(a)\}_{a\in\calA}$
\end{algorithmic}
\end{algorithm}

\section{Conditions for Recovery}\label{sec:recovery}

This section states concrete, checkable conditions under which Algorithm~\ref{alg:descendant} controls the contamination fraction $\delta$.

\begin{assumption}[Interventional faithfulness and detectability]\label{ass:mono}
There is an underlying causal DAG $G=(V,E)$ such that intervening on node $a$ changes (in distribution) precisely the descendants of $a${,} i.e., for every $i\in\desc(a)$ the marginal $P(Y_i\mid \mathrm{do}(a))$ differs from $P(Y_i)$ (no path cancellations), and for every $i\notin\desc(a)$ it is unchanged.
Moreover, each descendant effect is detectable with probability at least $1-\epsilon_{\mathrm{fn}}$ by the testing procedure used to form $\calS_a$.
Formally: for each $a\in\calA$ and $i\in\desc(a)$,
$\Pp(i\in\calS_a)\ge 1-\epsilon_{\mathrm{fn}}$,
and for each $i\notin\desc(a)$,
$\Pp(i\in\calS_a)\le \epsilon_{\mathrm{fp}}$.
\end{assumption}

\begin{proposition}[{Estimated descendant sets contain the true descendants}]\label{prop:superset}
Under Assumption~\ref{ass:mono}, for any intervention $a\in\calA$, if $\calU(a)\neq\emptyset$ and for every $b\in\calU(a)$ we have $a\in\desc(b)$ (i.e., all identified upstream interventions are true ancestors), then with probability at least $1-|\desc(a)|\cdot(|\calU(a)|+1)\cdot\epsilon_{\mathrm{fn}}$ (by a union bound),
\(
\desc(a)\subseteq {\widehat{\desc}(a)}.
\)
\end{proposition}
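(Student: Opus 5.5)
The proof is a deterministic containment argument followed by a union bound over a finite set of bad events. Since $\calU(a)\neq\emptyset$, we have $\widehat{\desc}(a)=\calS_a\cap\bigcap_{b\in\calU(a)}\calS_b$. So $\desc(a)\subseteq\widehat{\desc}(a)$ holds as soon as every $i\in\desc(a)$ lies in $\calS_a$ and in each $\calS_b$ with $b\in\calU(a)$.

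The structural step uses the hypothesis that every $b\in\calU(a)$ is a true ancestor, i.e.\ $a\in\desc(b)$. Descendant sets in a DAG are transitive: if $i\in\desc(a)$ and $a\in\desc(b)$, composing the directed paths $b\leadsto a\leadsto i$ gives $i\in\desc(b)$. Hence $\desc(a)\subseteq\desc(b)$ for every $b\in\calU(a)$. Assumption~\ref{ass:mono} then gives, for each $i\in\desc(a)$, both $\Pp(i\notin\calS_a)\le\epsilon_{\mathrm{fn}}$ and $\Pp(i\notin\calS_b)\le\epsilon_{\mathrm{fn}}$ for every $b\in\calU(a)$.

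Next, consider the failure events $E_{i,c}=\{i\notin\calS_c\}$ for $i\in\desc(a)$ and $c\in\{a\}\cup\calU(a)$. There are $|\desc(a)|\cdot(|\calU(a)|+1)$ of them, each of probability at most $\epsilon_{\mathrm{fn}}$. On the complement of their union, every true descendant survives every intersection, so $\desc(a)\subseteq\widehat{\desc}(a)$. The union bound then yields the stated probability $1-|\desc(a)|(|\calU(a)|+1)\epsilon_{\mathrm{fn}}$.

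The main subtlety is that $\calU(a)=\{b: a\in\calS_b\}$ is itself random, so the union is over a random index set and the per-test bounds of Assumption~\ref{ass:mono} are marginal. I would handle this by working on the event that $\calU(a)$ equals a given set $U$ of true ancestors of $a$, which is what the hypothesis asserts. The cleanest bound that avoids conditioning is to union-bound over all $b$ that are true ancestors of $a$; this is at least as large as the stated count. The literal statement, with $|\calU(a)|$ in the bound, follows if one reads $\calU(a)$ as fixed (conditioning on it) and assumes the detection tests for $i\in\calS_b$ with $i\neq a$ behave as in Assumption~\ref{ass:mono} given that event. I expect this is the reading intended, and I would state it explicitly in the proof.
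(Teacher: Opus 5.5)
Your argument is correct and is essentially the paper's proof: transitivity gives $\desc(a)\subseteq\desc(b)$ for each $b\in\calU(a)$, a union bound over $\calS_a$ and the $|\calU(a)|$ upstream sets handles each $i\in\desc(a)$, and a second union bound over $\desc(a)$ gives the stated probability. Your caveat about the randomness of $\calU(a)=\{b: a\in\calS_b\}$ is well taken, since the paper's proof silently treats $\calU(a)$ as fixed. Making that conditioning explicit, or falling back to the weaker bound over all true ancestors of $a$ in $\calA$, is therefore a genuine clarification rather than a deviation.
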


\begin{assumption}[Upstream diversity for intersections]\label{ass:diversity}
For any intervention $a\in\calA$ and non-descendant $i\notin \desc(a)$, there exists an intervention $b\in\calA$ such that $a\in\desc(b)$ but $i\notin\desc(b)$.
Moreover, for such a $b$ the testing procedure satisfies
$\Pp(a\in\calS_b \text{ and } i\notin\calS_b)\ge 1-\epsilon_{\mathrm{cx}}$.
\end{assumption}

Assumption~\ref{ass:diversity} requires ``upstream diversity'': for each non-descendant $i$, there exists an ancestor of $a$ whose descendants include $a$ but not $i$.
This is plausible when ancestors of $a$ have sufficiently diverse descendant sets.
{Here $\epsilon_{\mathrm{cx}}$ is a cross-screening failure probability: for the auxiliary intervention $b$, it is the chance that either $a$ is not detected in $\calS_b$ or the non-descendant $i$ is spuriously included in $\calS_b$. Under Assumption~\ref{ass:mono}, a union bound gives $\epsilon_{\mathrm{cx}}\le \epsilon_{\mathrm{fn}}+\epsilon_{\mathrm{fp}}$ for any fixed $b$ satisfying the structural condition; we keep $\epsilon_{\mathrm{cx}}$ separate because the joint event can be estimated or bounded directly. The final false-positive probability for the estimated descendant set $\widehat{\desc}(a)$ is bounded by both $\epsilon_{\mathrm{fp}}$ and $\epsilon_{\mathrm{cx}}$, so we write $\epsilon_{\mathrm{sel}}=\min\{\epsilon_{\mathrm{fp}},\epsilon_{\mathrm{cx}}\}$ for the sharper of these two bounds.}

\begin{proposition}[False positive control via upstream intersections]\label{prop:fp}
Under Assumptions~\ref{ass:mono}--\ref{ass:diversity}, for any non-descendant $i\notin\desc(a)$, the intersection estimator in Algorithm~\ref{alg:descendant} excludes $i$ with probability at least {$1-\epsilon_{\mathrm{sel}}$}.
Consequently, the false positive rate 
satisfies
\(
\Pp(\widehat Z_{a,i}=1 \mid Z_{a,i}=0) \le {\epsilon_{\mathrm{sel}}}.
\)
\end{proposition}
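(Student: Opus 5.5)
The plan is to prove Proposition~\ref{prop:fp} by bounding the probability of a single event, namely $i\in\widehat{\desc}(a)$, in two different ways. One bound will give $\epsilon_{\mathrm{fp}}$ and the other $\epsilon_{\mathrm{cx}}$, and together they give $\epsilon_{\mathrm{sel}}=\min\{\epsilon_{\mathrm{fp}},\epsilon_{\mathrm{cx}}\}$. Throughout, fix an intervention $a$ and a non-descendant $i\notin\desc(a)$, so that $Z_{a,i}=0$. The second sentence of the proposition is just a restatement of the first, because $\widehat Z_{a,i}=\1\{i\in\widehat{\desc}(a)\}$.

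\textbf{Bound via $\epsilon_{\mathrm{fp}}$.} In both branches of Algorithm~\ref{alg:descendant} we have $\widehat{\desc}(a)\subseteq\calS_a$. Hence
\[
\{i\in\widehat{\desc}(a)\}\subseteq\{i\in\calS_a\},
\]
and Assumption~\ref{ass:mono} gives $\Pp(i\in\calS_a)\le\epsilon_{\mathrm{fp}}$ for a non-descendant $i$.

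\textbf{Bound via $\epsilon_{\mathrm{cx}}$.} Assumption~\ref{ass:diversity} supplies an intervention $b\in\calA$ with $a\in\desc(b)$ and $i\notin\desc(b)$. Let $E_b=\{a\in\calS_b\}\cap\{i\notin\calS_b\}$, so that $\Pp(E_b)\ge 1-\epsilon_{\mathrm{cx}}$.

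On $E_b$, the condition $a\in\calS_b$ means $b\in\calU(a)$ by definition of $\calU(a)$. In particular $\calU(a)\neq\emptyset$, so the intersection branch of the algorithm is used. Then
\[
\widehat{\desc}(a)=\calS_a\cap\bigcap_{b'\in\calU(a)}\calS_{b'}\subseteq\calS_b .
\]
Since $i\notin\calS_b$ on $E_b$, it follows that $i\notin\widehat{\desc}(a)$. Therefore
\[
\Pp(i\in\widehat{\desc}(a))\le\Pp(E_b^c)\le\epsilon_{\mathrm{cx}}.
\]

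\textbf{Combining.} The two bounds concern the same probability, so it is at most their minimum, $\epsilon_{\mathrm{sel}}$. This gives $\Pp(\widehat Z_{a,i}=1\mid Z_{a,i}=0)\le\epsilon_{\mathrm{sel}}$.

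\textbf{Expected difficulty.} The only delicate point is the branch switch in the second bound. The witness event $E_b$ itself forces $b$ into $\calU(a)$, so we never need to handle the fallback case $\calU(a)=\emptyset$ separately. We also need no claim that the other members of $\calU(a)$ are true ancestors: extra upstream sets only shrink the intersection further, which can only help exclude $i$. No independence assumptions across tests are required, because each bound uses a single event.
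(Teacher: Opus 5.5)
Your proof is correct and follows the paper's argument essentially step for step. It gives the direct bound $\Pp(i\in\widehat{\desc}(a))\le\epsilon_{\mathrm{fp}}$ from $\widehat{\desc}(a)\subseteq\calS_a$, the upstream-pruning bound $\epsilon_{\mathrm{cx}}$ on the witness event of Assumption~\ref{ass:diversity}, and then takes the minimum; your explicit remark that this witness event forces $\calU(a)\neq\emptyset$, so the intersection branch is active, makes precise a step the paper leaves implicit.
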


\begin{corollary}[Contamination control via recovery conditions]\label{cor:contamination}
Under Assumptions~\ref{ass:mono}--\ref{ass:diversity} {and the upstream-ancestor condition of Proposition~\ref{prop:superset}}, the expected contamination fraction for target $i$ and the calibration set $\widehat{\calA}(i,a^\star)$ satisfies
{\[
\E[\delta] \le \frac{(1-\pi_0)(\bar u+1)\epsilon_{\mathrm{fn}}}{\pi_0(1-{\epsilon_{\mathrm{sel}}})+(1-\pi_0)(\bar u+1)\epsilon_{\mathrm{fn}}},
\]}
where $\pi_0$ is the fraction of {safe} calibration interventions for target $i$ and $\bar u = \max_a |\calU(a)|$ is the maximum upstream set size across interventions{; the factor $\bar u+1$ counts the own affected set $\calS_a$ together with the $|\calU(a)|$ upstream sets in the union bound}.
{Under the descendant-positive convention the contamination-driving false-negative rate $(\bar u+1)\epsilon_{\mathrm{fn}}$ appears in the numerator and the efficiency-cost false-positive rate {$\epsilon_{\mathrm{sel}}$} in the denominator; a full derivation is in Appendix~\ref{app:fpr_fnr}.}
In sparse networks where $\pi_0$ is near 1 and {$\epsilon_{\mathrm{fn}}$} is small, $\E[\delta]$ is small.
\end{corollary}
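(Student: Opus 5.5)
The plan is to derive Corollary~\ref{cor:contamination} from the identity for $\delta$ in Section~\ref{sec:taskdriven}, namely $\delta = (1-\pi_0)\FNR_i/[\pi_0(1-\FPR_i)+(1-\pi_0)\FNR_i]$. The key observation is that the right-hand side is increasing in $\FNR_i$ and increasing in $\FPR_i$. Indeed, it has the form $x/(c+x)$ with $x=(1-\pi_0)\FNR_i$ and $c=\pi_0(1-\FPR_i)$, which is increasing in $x$ and decreasing in $c$. So it suffices to upper bound $\FNR_i$ and $\FPR_i$ and substitute these bounds.

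\textbf{Step 1: bounding $\FPR_i$.} Proposition~\ref{prop:fp} gives $\Pp(\widehat Z_{a,i}=1\mid Z_{a,i}=0)\le\epsilon_{\mathrm{sel}}$ for every safe pair. Hence $1-\FPR_i\ge 1-\epsilon_{\mathrm{sel}}$, which handles the denominator term.

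\textbf{Step 2: bounding $\FNR_i$.} Take $a$ with $i\in\desc(a)$. A contaminating miss means $i\notin\widehat{\desc}(a)$. Under the upstream-ancestor condition, the argument of Proposition~\ref{prop:superset} applies to the single target $i$ rather than to all of $\desc(a)$. The target $i$ lies in $\desc(b)$ for every $b\in\calU(a)\cup\{a\}$, because $\desc(a)\subseteq\desc(b)$. Therefore $i$ is removed by the intersection only if some $\calS_b$ misses it. A union bound over the $|\calU(a)|+1$ sets, each with miss probability at most $\epsilon_{\mathrm{fn}}$ by Assumption~\ref{ass:mono}, then gives $\Pp(\widehat Z_{a,i}=0\mid Z_{a,i}=1)\le(|\calU(a)|+1)\epsilon_{\mathrm{fn}}\le(\bar u+1)\epsilon_{\mathrm{fn}}$. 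When $\calU(a)=\emptyset$, the estimate reduces to $\calS_a$ and the bound $\epsilon_{\mathrm{fn}}$ is even smaller.

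\textbf{Step 3: passing to $\E[\delta]$.} This is the main obstacle. The identity for $\delta$ holds in the population or rate sense, not for the realized random ratio. Since a ratio of random counts is not linear, I would interpret $\E[\delta]$ through the same convention the paper uses in Section~\ref{sec:taskdriven}. That is, $\delta$ is evaluated at the expected selection rates, and the per-intervention misclassification probabilities play the role of $\FNR_i$ and $\FPR_i$. The result then follows by plugging the bounds from Steps 1 and 2 into the monotone map.

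If a statement about the genuine expectation of the random ratio is required, I would first condition on the selected set being nonempty. I would then either invoke a concentration argument for the counts of good and bad interventions, or define $\delta$ as the ratio of expected counts. I would note this interpretation explicitly, with the full derivation deferred to Appendix~\ref{app:fpr_fnr} as the statement indicates.
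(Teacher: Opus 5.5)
Your proposal is correct and follows essentially the same route as the paper's derivation in Appendix~\ref{app:fpr_fnr}. Both use a per-pair union bound giving $\FNR_{a,i}\le(|\calU(a)|+1)\epsilon_{\mathrm{fn}}\le(\bar u+1)\epsilon_{\mathrm{fn}}$, the bound $\FPR_{a,i}\le\epsilon_{\mathrm{sel}}$ from Proposition~\ref{prop:fp}, and monotonicity of $A/(S+A)$ in the contaminating mass $A$ and the safe mass $S$. The paper adopts exactly your Step~3 convention, writing $\E[\delta]=A/(S+A)$ ``at the level of expected selected masses,'' so your fallback concentration argument is not needed to match it; the paper handles the realized random ratio separately through the Chebyshev bound of Proposition~\ref{prop:prob_delta}.
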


Together, Propositions~\ref{prop:superset}--\ref{prop:fp} and Corollary~\ref{cor:contamination} imply a tunable tradeoff between {spurious descendants (false positives, which reduce calibration set size but do not violate coverage) and missed descendants (false negatives, which contaminate the calibration set)}, directly controlling the coverage loss via Theorem~\ref{thm:delta}.

{Assumption~\ref{ass:diversity} (upstream diversity) is the strongest of the three, but Algorithm~\ref{alg:descendant} degrades gracefully when it is only partially satisfied. When no upstream interventions are identified for $a$ (i.e., $\calU(a)=\emptyset$), the algorithm sets $\widehat{\desc}(a)=\calS_a$ rather than failing, so Theorem~\ref{thm:delta} remains in force: a missing intersection step raises the {required contamination upper bound $\hat\delta$} (and hence the width of the corrected interval) rather than silently breaking coverage. The false-positive control of Proposition~\ref{prop:fp} weakens for such $a$, but validity is preserved through the $\hat\delta$-correction of Corollary~\ref{cor:corrected}.}
{The FNR bound is conditional on the identified upstream interventions being true ancestors; if spurious upstream interventions enter $\calU(a)$, they can prune true descendants, so their probability must be included in any unconditional end-to-end recovery bound.}

\section{Experiments}\label{sec:experiments}

We evaluate the framework on synthetic and real data, focusing on three questions:
(Q1) Does selective conformal prediction with learned $\widehat Z$ maintain valid coverage?
(Q2) Does coverage degrade with contamination $\delta$ as predicted by Theorem~\ref{thm:delta}?
(Q3) Does the corrected procedure (Corollary~\ref{cor:corrected}) restore nominal coverage {when supplied with realized or proxy contamination}?
{In the experiments, Corrected uses the realized contamination fraction computed from true synthetic labels or real-data proxy labels, so these results test the correction formula given such a bound rather than a fully deployable $\hat\delta$ estimator.}

\noindent
Throughout, \emph{Coverage} denotes the empirical fraction of test points whose true outcome falls within the prediction interval, and \emph{Width} denotes the average interval length across test points.
Both are computed over held-out (test intervention, target gene) pairs {with $Z_{a^\star,i}=0$, i.e.\ pairs the test intervention does not affect, for which a selective interval is meaningful}.
{We treat coverage as a validity constraint rather than a monotone leaderboard score: values above $0.9$ are acceptable, but higher coverage can simply mean more conservative intervals. In tables, bold coverage values meet or exceed the nominal $0.9$ level. Among entries that meet this coverage threshold, bold width marks the narrowest interval and underlined width marks the second narrowest; widths for methods below nominal coverage are not treated as wins. Appendix~\ref{app:method_definitions} gives procedural definitions of the table methods and additional baselines.}

\subsection{Synthetic interventional SEM}\label{sec:exp_sem}

\textbf{Setup.}
We generate random DAGs using the Erd\H{o}s--R\'enyi model on $p=200$ nodes with expected degree $d_{\mathrm{avg}}=2.0$.
Edge weights are sampled uniformly from $[-1,-0.3]\cup[0.3,1]$.
The linear Gaussian SEM is $V = B^\top V + \varepsilon$ with $\varepsilon\sim\mathcal{N}(0,I)$, where $V\in\R^p$ is the vector of all node values.
Interventions are hard (do-operator), setting $V_a:=0$.
We use $n_{\mathrm{obs}}=200$ observational and $n_a=200$ interventional samples per node, with $|\calA|=150$ randomly selected intervention targets.
Ground-truth descendant sets $\desc(a)$ are computed from the transitive closure of $B$.

\textbf{Conformal scores.}
To isolate calibration-set selection from predictor quality, we use synthetic scores: {safe} pairs $(Z_{a,i}=0)$ receive $R_i^{(a)} = |N(0,1)|$, while {pairs where intervention $a$ affects target $i$} receive $R_i^{(a)} = |N(0,0.15)|$.
These smaller contaminating scores pull the conformal quantile downward, causing under-coverage {when the calibration set is contaminated by these affected interventions}.

\textbf{DEG procedure and descendant estimation.}
Gene-wise two-sample $t$-tests comparing interventional vs.\ observational data with Benjamini--Hochberg correction at $q=0.05$ yield DEG sets $\calS_a$, and Algorithm~\ref{alg:descendant} estimates $\widehat Z_{a,i}$.
Interventions are split 10\%/81\%/9\% into training/calibration/test; $\widehat Z$ is fit without test interventions, and coverage is evaluated on held-out tests using calibration interventions. {No outcome predictor is trained here: the training split feeds only descendant estimation (the DEG tests and Algorithm~\ref{alg:descendant}).}
Because synthetic scores are generated independently of DEG-based descendant estimation, selecting $\widehat{\calA}$ does not create dependence between $\widehat Z$ and calibration scores.

\textbf{Results.}
Table~\ref{tab:main} summarizes results over 20 random seeds (220{,}132 evaluations).
All four methods achieve coverage near the nominal $1-\alpha=0.9$ level.
The {realized contamination fraction, computed from the true DAG after Algorithm~\ref{alg:descendant} selects the calibration set,} is very low ($\delta=0.018$), indicating that the intersection-based procedure keeps contamination well-controlled in the linear SEM setting.
Because $\pi_0$ is high in this sparse network (most interventions {do not affect} a given target), the learned selector labels nearly all calibration interventions as {safe}, so $\widehat{\calA}\approx\calA_{\mathrm{cal}}$ and the Estimated and Pooled methods coincide; their differences emerge under controlled contamination (Section~\ref{sec:exp_delta}).
{Corrected selective CP is the only non-oracle method above nominal coverage in this table, {using the realized $\delta$ for the level correction}; wider intervals reflect the cost of the $\alpha$-correction even when contamination is small.}

\begin{table}[t]
\centering
\caption{{Main synthetic experiment ($p=200$, $|\calA|=150$, $\alpha=0.1$). Coverage and width are averaged over 20 seeds. {The realized $\delta$ is not applicable to Pooled, which applies no selection.} 
}}
\label{tab:main}
\footnotesize
\begin{tabular}{lcccc}
\toprule
Method & Coverage & Width & $n_{\mathrm{cal}}$ & {realized $\delta$} \\
\midrule
Oracle & $\okcov{0.901}$ & $\bestw{3.35}$ & $118.8$ & $0.000$ \\
Estimated & $0.899$ & $3.32$ & $121.0$ & $0.018$ \\
Pooled & $0.899$ & $3.32$ & $121.0$ & {--} \\
Corrected & $\okcov{0.918}$ & $\secondw{3.58}$ & $121.0$ & $0.018$ \\
\bottomrule
\end{tabular}
\end{table}

\subsection{Controlled \texorpdfstring{$\delta$}{delta} ablation}\label{sec:exp_delta}

To test Theorem~\ref{thm:delta}, we replace a target fraction $\delta_{\mathrm{inject}}\in\{0, 0.05, 0.1, 0.15, 0.2, 0.3\}$ of scores in the true {safe} calibration set with {scores from interventions that affect the target}, resampling to achieve the target contamination rate.
We use $p=200$ nodes, $|\calA|=150$ interventions, and repeat over 100 random seeds (1{,}065{,}408 total evaluations).
{For finite-width reporting, we clip the corrected level below at $\alpha'=0.01$; if $\alpha-g(\delta,n)\le0$, Corollary~\ref{cor:corrected} would instead return the infinite interval.}

\textbf{Results.}
Figure~\ref{fig:delta_cov} shows the results (full numerical breakdown in Table~\ref{tab:delta} in the Supplementary Material).
Estimated coverage drops monotonically from $0.905$ at $\delta=0$ to $0.867$ at $\delta=0.30$, as predicted by Theorem~\ref{thm:delta}.
{When supplied with the injected contamination level,} the Corrected method empirically achieves coverage $\ge 0.95$ at all nonzero contamination levels ($\delta\ge 0.05$), well above the nominal $0.9$, at the cost of $1.2$--$1.8\times$ wider intervals {under this clipped finite-width implementation}.
Oracle and Pooled coverage remain flat, while empirical coverage always exceeds the theoretical lower bound $1-\alpha-g(\delta,n)$ (Figure~\ref{fig:bound_gap} in the Supplementary Material).

\begin{figure}[t]
\centering
\includegraphics[width=0.85\linewidth]{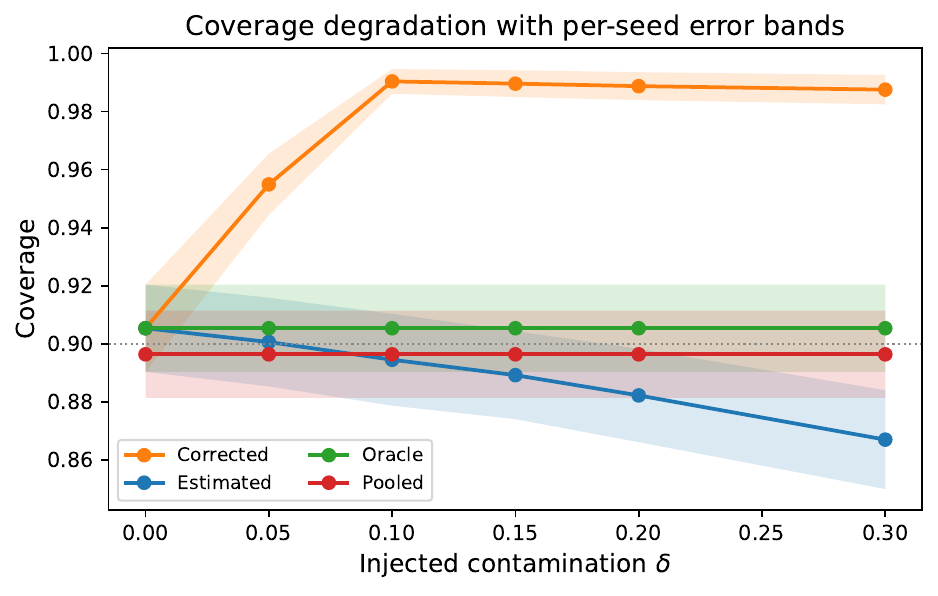}
\caption{{Coverage under injected contamination. Estimated degrades with $\delta$, Corrected remains above nominal {when supplied with the injected $\delta$} under the clipped finite-width implementation, and Oracle/Pooled are flat. Dashed line: $1-\alpha=0.9$; bands show $\pm1$ seed-level standard deviation.}}
\label{fig:delta_cov}
\end{figure}

\subsection{Real perturbation data: Replogle K562 CRISPRi screen}\label{sec:exp_real}

We apply the complete method to real data from the Replogle K562 CRISPRi genome-scale screen~\citep{replogle2022genomewide}, accessed via Zenodo.
We select the 50 perturbations with the largest cell counts ($\ge 200$ cells each), retaining $p\approx 5{,}000$ genes expressed in $\ge 10\%$ of cells.
{Prior perturbation-screen benchmarks often use restricted intervention or gene subsets for scale \citep{sethuraman2023nodags,rohbeck2024bicycle,roohani2024gears,lopez2022dcdfg}, and we follow the same practical convention using a fixed cell-count-selected Replogle subset of 50 perturbations and $\sim$5{,}000 expressed genes.}
Log-fold-change (LFC) vectors are computed per perturbation against the non-targeting control.
Since ground-truth descendant sets are unavailable, we define proxy ``oracle'' affected sets as the top $10\%$ of genes by absolute LFC per perturbation, and DEG sets analogously.
Perturbations are split into 10\% training, 81\% calibration, and 9\% test (5 test perturbations, $\sim$40 calibration perturbations).
We fit $\widehat Z$ using only the training and calibration perturbations (holding out test perturbations from descendant discovery), and evaluate coverage on the held-out test perturbations using calibration perturbations for conformal calibration.

\textbf{Results.}
Table~\ref{tab:real} {reports the primary four methods and additional baselines on the same split; Appendix~\ref{app:method_definitions} defines all table methods procedurally}.
{Among the causal-selector methods, Corrected is the only one above nominal coverage} ($0.906$, averaged over feasible evaluations{, those yielding a finite interval}), {using the proxy contamination fraction derived from LFC-based labels}.
However, it is finite for only $59.8\%$ of evaluations; the rest are infinite because {the clipped corrected level is often $\alpha'=0.01$, requiring roughly $99$ calibration scores, while this split has} $n_{\mathrm{cal}}\approx 40$. 
{The proxy oracle undercovers ($0.864$), indicating proxy safe-set exchangeability violations.}
{{Weighted CP~\citep{tibshirani2019covshift} is implemented leakage-free: weights use only control-cell coexpression of perturbation targets, not held-out LFC response profiles. It reaches $0.925$ coverage with full feasibility but wider intervals ($0.391$), so it trades width for feasibility relative to Corrected.} {The full-graph proxy and observational-only heuristic collapse to Estimated/Pooled behavior ($0.888$), supporting Algorithm~\ref{alg:descendant} as the selector of choice in this sparse regime.
{Corrected is therefore above nominal coverage on the feasible proxy-labeled cases but not dominant on this real-data proxy.}}}

\begin{table}[t]
\centering
\caption{{Real-data results on Replogle K562 CRISPRi. Coverage and width are averaged over all evaluations except Corrected, where they are averaged over feasible finite-interval cases, so its width is not directly comparable to full-feasibility rows. {Corrected uses the proxy contamination fraction derived from LFC-based labels.} Feasible is the fraction of finite intervals; only Corrected produces infinite intervals. For WCP, $n_{\mathrm{cal}}$ is the Kish effective sample size.}}
\label{tab:real}
\footnotesize
\begin{tabular}{lcccc}
\toprule
Method & Coverage & Width & $n_{\mathrm{cal}}$ & Feasible \\
\midrule
Oracle (proxy) & $0.864$ & $0.306$ & $36.7$ & $100\%$ \\
Estimated & $0.888$ & $0.349$ & $40.0$ & $100\%$ \\
Pooled & $0.888$ & $0.349$ & $40.0$ & $100\%$ \\
Corrected & $\okcov{0.906}$ & $0.329$ & $40.0$ & $59.8\%$ \\
{{Ctrl-coexpr WCP}} & {$\okcov{0.925}$} & {$0.391$} & {$40.0$} & {$100\%$} \\
{{Full-graph proxy}} & {$0.888$} & {$0.354$} & {$39.7$} & {$100\%$} \\
{Obs.-only sel.} & {$0.888$} & {$0.349$} & {$40.0$} & {$100\%$} \\
\bottomrule
\end{tabular}
\end{table}

\textbf{Limitations of the real-data evaluation.}
The proxy oracle is constructed from LFC quantiles, not from true causal knowledge.
Its coverage shortfall ($0.864$ vs.\ the nominal $0.9$) indicates that the LFC-based proxy does not perfectly capture the exchangeability structure, likely because (i) indirect downstream effects create correlated residuals even among ``unaffected'' genes, and (ii) batch effects and technical noise break the i.i.d.\ assumption within strata.
{Moreover, on real data the proxy safe sets and residual scores are both computed from the same expression matrix, so the selective-exchangeability assumption should be read as an approximate diagnostic rather than a verified condition.}
{A genuine oracle would require independently validated causal annotations (for example, curated knockout-effect databases or matched perturbation atlases) which are not available at the scale of $50$ perturbations $\times$ $5{,}000$ genes for K562; the controlled-$\delta$ synthetic experiment (Section~\ref{sec:exp_delta}) therefore remains the cleanest validation of the method itself.}
Despite these violations, {within the causal-selector methods,} the qualitative ordering {(}Corrected $>$ Estimated $\approx$ Pooled $>$ Oracle{)} is consistent with the theoretical predictions, and {Corrected is the only one of these methods to exceed nominal coverage}.
{The $59.8\%$ feasibility is a finite-sample artifact of the small calibration set rather than a methodological limit: a scaling experiment on the same split (Appendix~\ref{app:extra}, Figure~\ref{fig:feasibility}) shows feasibility climbing monotonically to $100\%$ by $n_{\mathrm{cal}}=160$ while {proxy $\delta$} stays near $0.083$.}
A more thorough real-data evaluation with larger calibration sets (more perturbations) and biologically validated oracle sets remains open.

\section{Conclusion}\label{sec:discussion}

{We show that selective conformal calibration can exploit partial causal structure without requiring full graph recovery, and that the coverage cost of imperfect selection can be quantified directly through the contamination fraction of the selected calibration set. The synthetic experiments provide the cleanest validation of the theory, while the Replogle K562 experiment illustrates both the promise and the finite-sample limitations of applying the method with proxy causal labels. Open directions are collected in Appendix~\ref{app:future_work}.}

\begin{acknowledgements}
A.A. was partially supported by the Patient-Centered Outcomes Research Institute (PCORI) award ME-2023C1-32148 and by the National Institutes of Health under award R01MH139379. J.P.L. was partially supported by the National Cancer Institute and the National Center for Advancing Translational Sciences of the National Institutes of Health under awards P50CA127001-16, CCSG P30CA016672-46, and CCTS UM1TR004906.

\paragraph{Reproducibility.} All experiments, tables, and figures in this paper can be reproduced with the code, data-processing scripts, and notebooks available at \url{https://github.com/AsiaeeLab/selective-conformal-interventions}.
\end{acknowledgements}

\bibliographystyle{plainnat}
\bibliography{refs}

\newpage
\onecolumn

\title{Partial Causal Structure Learning for Valid Selective Conformal Inference under Interventions\\(Supplementary Material)}
\maketitle

\appendix

\section{Proof of Theorem~\ref{thm:delta}}\label{app:proof}

\begin{proof}[Proof of Theorem~\ref{thm:delta}]
\textbf{Setup.}
Fix target $i$ and test intervention $a^\star$ with $Z_{a^\star,i}=0$.
Let $\widehat{\calA}$ be the selected calibration set of size $n$.
Partition $\widehat{\calA}$ into $G=\widehat{\calA}\cap\calA^\star$ (good, $|G|=m$) and $B=\widehat{\calA}\setminus\calA^\star$ (bad, $|B|=n-m$).
Let $k=\lceil(n+1)(1-\alpha)\rceil$ and let $\hat q$ be the $k$-th order statistic of all $n$ calibration scores.

\textbf{Step 1 (Worst-case bad scores).}
The adversary controls the values of the $n-m$ bad scores.
To minimize $\hat q$ (and hence minimize coverage), the adversary places all bad scores at $-\infty$.
Then the bottom $n-m$ positions in the sorted order are occupied by bad scores, so
\[
\hat q \ge R^G_{k-(n-m)},
\]
where $R^G_1\le\cdots\le R^G_m$ are the good score order statistics.
(If $k-(n-m)\le 0$, then $\hat q\ge -\infty$, which is trivially true; if $k-(n-m)>m$, then $\hat q\ge\infty$, and coverage is 1.)

\textbf{Step 2 (Exchangeability of good scores).}
The $m+1$ values $\{R^G_1,\ldots,R^G_m,R_i^{(a^\star)}\}$ are exchangeable by Assumption~\ref{ass:selective_exchange}.
By the fundamental property of exchangeable sequences (see \citet[Lemma 1]{shafer2008tutorial} or \citet[Proposition 1]{vovk2005algorithmic}), the test score $R_i^{(a^\star)}$ is equally likely to occupy any of the $m+1$ ranks among these values.
Therefore, for any integer $1\le r\le m$,
\[
\Pp\big(R_i^{(a^\star)} \le R^G_r\big) \ge \frac{r}{m+1}.
\]

\textbf{Step 3 (Combining).}
Setting $r = k-(n-m)$ (assuming $1\le r\le m$; the bound is trivial otherwise):
\[
\Pp(R_i^{(a^\star)}\le\hat q) \ge \Pp(R_i^{(a^\star)}\le R^G_r) \ge \frac{k-(n-m)}{m+1}.
\]

\textbf{Step 4 (Algebra).}
Since $k=\lceil(n+1)(1-\alpha)\rceil\ge (n+1)(1-\alpha)$:
\begin{align*}
\frac{k-(n-m)}{m+1}
&\ge \frac{(n+1)(1-\alpha)-(n-m)}{m+1}\\
&= \frac{m+1-(n+1)\alpha}{m+1}\\
&= 1 - \frac{(n+1)\alpha}{m+1}.
\end{align*}
The coverage gap beyond $1-\alpha$ is:
\begin{align*}
1-\frac{(n+1)\alpha}{m+1} - (1-\alpha)
&= \alpha - \frac{(n+1)\alpha}{m+1}\\
&= \alpha\left(1-\frac{n+1}{m+1}\right)\\
&= -\alpha\cdot\frac{n-m}{m+1}.
\end{align*}
Since $\alpha\le 1$, we bound $\alpha\cdot\frac{n-m}{m+1}\le \frac{n-m}{m+1}$, giving
\[
\Pp(R_i^{(a^\star)}\le\hat q) \ge 1 - \alpha - \frac{n-m}{m+1}.
\]
Substituting $n-m=\delta n$ and $m=(1-\delta)n$:
\[
\frac{n-m}{m+1} = \frac{\delta n}{(1-\delta)n+1} = g(\delta,n). \qedhere
\]
\end{proof}

\textbf{Tightness.}
The bound is tight in the following sense: for any $\delta,n,\alpha$, there exist score distributions (with the bad scores placed adversarially at $-\infty$) such that the coverage equals exactly $\frac{k-(n-m)}{m+1}$, which matches the lower bound up to the ceiling in $k$.

\section{Proof of Corollary~\ref{cor:corrected}}\label{app:corollary}

\begin{proof}
By Theorem~\ref{thm:delta} applied with nominal level $\alpha'$, coverage is at least
\[
\Pp(\ldots)\ge 1-\alpha'-g(\delta,n).
\]
Since $\delta\le \hat\delta$ and $g(\delta,n)$ is nondecreasing in $\delta$, we have $g(\delta,n)\le g(\hat\delta,n)$, hence
\begin{align*}
\Pp(\ldots)
&\ge 1-\alpha'-g(\hat\delta,n)\\
&= 1-(\alpha-g(\hat\delta,n))-g(\hat\delta,n)\\
&= 1-\alpha.
\end{align*}
\end{proof}

\section{Proof of Propositions~\ref{prop:superset} and~\ref{prop:fp}}\label{app:props}

\begin{proof}[Proof of Proposition~\ref{prop:superset}]
Fix $a\in\calA$ and a true descendant $i\in\desc(a)$.
{Since Algorithm~\ref{alg:descendant} sets $\widehat{\desc}(a)=\Cand(a)$, it suffices to show} $i\in\Cand(a) = \calS_a \cap \bigcap_{b\in\calU(a)}\calS_b$.

By Assumption~\ref{ass:mono}, $\Pp(i\in\calS_a)\ge 1-\epsilon_{\mathrm{fn}}$, hence $\Pp(i\notin\calS_a)\le \epsilon_{\mathrm{fn}}$.

For each $b\in\calU(a)$: if $b$ is a true ancestor of $a$ (i.e., $a\in\desc(b)$), then $\desc(a)\subseteq\desc(b)$, so $i\in\desc(b)$.
By Assumption~\ref{ass:mono}, $\Pp(i\in\calS_b)\ge 1-\epsilon_{\mathrm{fn}}$, hence $\Pp(i\notin\calS_b)\le \epsilon_{\mathrm{fn}}$.

The event $\{i\notin\Cand(a)\}$ requires either $i\notin\calS_a$ or $\exists\, b\in\calU(a)$ with $i\notin\calS_b$.
By a union bound:
\[
\Pp(i\notin\Cand(a)) \le (|\calU(a)|+1)\cdot\epsilon_{\mathrm{fn}}.
\]
A further union bound over all $|\desc(a)|$ true descendants gives the result.
Let $E_a=\{\exists\, i\in\desc(a): i\notin\Cand(a)\}$.
\[
\Pp(E_a)
\le |\desc(a)|\cdot(|\calU(a)|+1)\cdot \epsilon_{\mathrm{fn}}.
\]
Hence $\desc(a)\subseteq{\widehat{\desc}(a)}$ with probability at least $1-|\desc(a)|\cdot(|\calU(a)|+1)\cdot\epsilon_{\mathrm{fn}}$.
\end{proof}

\begin{proof}[Proof of Proposition~\ref{prop:fp}]
Fix $a\in\calA$ and $i\notin\desc(a)$.
{First, because $i\notin\desc(a)$, Assumption~\ref{ass:mono} gives $\Pp(i\in\calS_a)\le\epsilon_{\mathrm{fp}}$. Since Algorithm~\ref{alg:descendant} can include $i$ in $\widehat{\desc}(a)$ only if $i\in\calS_a$, this already gives $\Pp(i\in\widehat{\desc}(a))\le\epsilon_{\mathrm{fp}}$.}
By Assumption~\ref{ass:diversity}, there exists $b\in\calA$ such that $a\in\desc(b)$ but $i\notin\desc(b)$ and
\[
\Pp(a\in\calS_b \text{ and } i\notin\calS_b) \ge 1-\epsilon_{\mathrm{cx}}.
\]
On this event, $b$ is identified as upstream of $a$ (since $a\in\calS_b$), but $i\notin\calS_b$.
Thus $i\notin \bigcap_{b'\in\calU(a)}\calS_{b'}$ and therefore $i\notin\Cand(a)$.
Hence $\Pp(i\in\Cand(a))\le \epsilon_{\mathrm{cx}}$.
{Since $\widehat{\desc}(a)=\Cand(a)$ in Algorithm~\ref{alg:descendant}, combining the direct false-positive bound with the upstream-pruning bound gives $\Pp(i\in\widehat{\desc}(a))\le\epsilon_{\mathrm{sel}}=\min\{\epsilon_{\mathrm{fp}},\epsilon_{\mathrm{cx}}\}$.}
Since $\widehat Z_{a,i}=\1\{i\in\widehat{\desc}(a)\}$,
\begin{align*}
\Pp(\widehat Z_{a,i}=1 \mid Z_{a,i}=0)
&= \Pp(i\in\widehat{\desc}(a) \mid i\notin\desc(a))\\
&\le {\epsilon_{\mathrm{sel}}}.
\end{align*}
\end{proof}

{
\section{FPR/FNR Convention for Algorithm~\ref{alg:descendant}}
\label{app:fpr_fnr}

We restate the recovery consequences of Algorithm~\ref{alg:descendant} using the descendant-positive convention
\[
Z_{a,i}=\1\{i\in\desc(a)\},
\qquad
\widehat Z_{a,i}=\1\{i\in\widehat{\desc}(a)\}.
\]
Thus
\[
\FNR_{a,i}=\Pp(\widehat Z_{a,i}=0\mid Z_{a,i}=1)
\]
is the probability that {a calibration intervention that truly affects target $i$} is selected as {safe}; this is the error that contaminates selective conformal calibration. Conversely,
\[
\FPR_{a,i}=\Pp(\widehat Z_{a,i}=1\mid Z_{a,i}=0)
\]
is the probability of discarding a truly safe calibration intervention, which affects efficiency but not validity.

\begin{proposition}[FNR bound for Algorithm~\ref{alg:descendant}]
\label{prop:fnr_restatement}
Under Assumption~\ref{ass:mono}, fix $a\in\calA$ and suppose that $\calU(a)\neq\emptyset$ and every identified upstream intervention is a true ancestor of $a$, i.e.,
\[
b\in\calU(a)\quad\Longrightarrow\quad a\in\desc(b).
\]
Then for any fixed $i\in\desc(a)$,
\[
\Pp(i\notin{\widehat{\desc}(a)})
\le
\big(|\calU(a)|+1\big)\epsilon_{\mathrm{fn}},
\]
and hence under the descendant-positive convention,
\[
\FNR_{a,i}
\le
\big(|\calU(a)|+1\big)\epsilon_{\mathrm{fn}}.
\]
A further union bound over $i\in\desc(a)$ gives
\[
\Pp\big(\desc(a)\subseteq{\widehat{\desc}(a)}\big)
\ge
1-|\desc(a)|\big(|\calU(a)|+1\big)\epsilon_{\mathrm{fn}}.
\]
\end{proposition}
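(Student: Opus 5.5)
The plan is to reduce the statement to the event-level argument already used for Proposition~\ref{prop:superset}, and then to read off the FNR bound using the descendant-positive convention. Fix $a\in\calA$ with $\calU(a)\neq\emptyset$, every $b\in\calU(a)$ a true ancestor of $a$, and a fixed $i\in\desc(a)$. Algorithm~\ref{alg:descendant} outputs $\widehat{\desc}(a)=\Cand(a)=\calS_a\cap\bigcap_{b\in\calU(a)}\calS_b$ in this case. So the event $\{i\notin\widehat{\desc}(a)\}$ is contained in the union of $\{i\notin\calS_a\}$ and $\{i\notin\calS_b\}$ over $b\in\calU(a)$.

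\textbf{Step 1: bound each term in the union.} The first term is at most $\epsilon_{\mathrm{fn}}$ directly by Assumption~\ref{ass:mono}, since $i\in\desc(a)$. For each $b\in\calU(a)$, the hypothesis $a\in\desc(b)$ together with transitivity of descent in the DAG gives $\desc(a)\subseteq\desc(b)$. Hence $i\in\desc(b)$, and Assumption~\ref{ass:mono} again gives $\Pp(i\notin\calS_b)\le\epsilon_{\mathrm{fn}}$.

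\textbf{Step 2: union bound.} Summing the $|\calU(a)|+1$ terms yields $\Pp(i\notin\widehat{\desc}(a))\le(|\calU(a)|+1)\epsilon_{\mathrm{fn}}$.

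\textbf{Step 3: translate to FNR.} Since $\widehat Z_{a,i}=\1\{i\in\widehat{\desc}(a)\}$ and $Z_{a,i}=1$ exactly when $i\in\desc(a)$, we have $\FNR_{a,i}=\Pp(i\notin\widehat{\desc}(a))$ for this fixed true descendant. This immediately gives the second display.

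\textbf{Step 4: uniform statement.} A second union bound over the $|\desc(a)|$ true descendants bounds the probability of $\{\exists\, i\in\desc(a): i\notin\widehat{\desc}(a)\}$. Taking complements gives $\Pp(\desc(a)\subseteq\widehat{\desc}(a))\ge 1-|\desc(a)|(|\calU(a)|+1)\epsilon_{\mathrm{fn}}$, exactly as in Proposition~\ref{prop:superset}.

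The main subtlety, which I expect to be the only real obstacle, is that $\calU(a)$ is itself random: it is determined by the sets $\calS_b$. The per-$b$ bound from Assumption~\ref{ass:mono} is stated for a fixed $b$, so it does not automatically hold for a $b$ selected by the data. I would handle this by reading the statement as conditional on the realized upstream set, as the proposition's hypothesis does. Concretely, I would take the union over $b$ in the fixed ancestor set $\{b\in\calA: a\in\desc(b)\}$, which contains $\calU(a)$ under the hypothesis, or else state explicitly that the detection probabilities are assumed to hold given $\calU(a)$. The former yields a bound with $|\mathrm{anc}(a)\cap\calA|$ in place of $|\calU(a)|$. The latter matches the stated constant, and this is the reading I would adopt, with a remark to that effect.
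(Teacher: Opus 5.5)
Your proposal is correct and follows the paper's proof step for step. Like the paper, you contain $\{i\notin\Cand(a)\}$ in the union of the $|\calU(a)|+1$ miss events, use $\desc(a)\subseteq\desc(b)$ for each true ancestor $b\in\calU(a)$ to invoke Assumption~\ref{ass:mono}, identify $\FNR_{a,i}$ with $\Pp(i\notin\Cand(a))$ because $Z_{a,i}=1$ is deterministic, and finish with a union bound over $\desc(a)$.

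Your caveat is also legitimate and goes beyond the paper, whose proof passes over it by treating $\calU(a)$ as fixed. Because $\calU(a)$ is data-dependent, the stated constant implicitly requires the detection bounds to hold conditionally on the realized upstream set. Without that, you would need a union over all true ancestors of $a$ in $\calA$.
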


\begin{proof}
Fix $i\in\desc(a)$.  In order for $i$ to belong to
\[
\Cand(a)=\calS_a\cap\bigcap_{b\in\calU(a)}\calS_b,
\]
we need $i\in\calS_a$ and $i\in\calS_b$ for every $b\in\calU(a)$.
By Assumption~\ref{ass:mono},
\[
\Pp(i\notin\calS_a)\le\epsilon_{\mathrm{fn}}.
\]
If $b\in\calU(a)$ is a true ancestor of $a$, then $\desc(a)\subseteq\desc(b)$, hence $i\in\desc(b)$ and again
\[
\Pp(i\notin\calS_b)\le\epsilon_{\mathrm{fn}}.
\]
A union bound over $\calS_a$ and the $|\calU(a)|$ upstream sets gives the per-pair bound
\[
\Pp(i\notin\Cand(a))
\le
\big(|\calU(a)|+1\big)\epsilon_{\mathrm{fn}}.
\]
Since $\widehat{\desc}(a)=\Cand(a)$ in Algorithm~\ref{alg:descendant},
\[
\Pp(i\notin{\widehat{\desc}(a)})
=
\Pp(i\notin\Cand(a))
\le
\big(|\calU(a)|+1\big)\epsilon_{\mathrm{fn}},
\]
and
\[
\FNR_{a,i}
=
\Pp(\widehat Z_{a,i}=0\mid Z_{a,i}=1)
=
\Pp(i\notin\Cand(a)\mid i\in\desc(a))
\le
\big(|\calU(a)|+1\big)\epsilon_{\mathrm{fn}}.
\]
Taking a further union bound over all $i\in\desc(a)$ yields
\[
\Pp\big(\exists\, i\in\desc(a):i\notin\Cand(a)\big)
\le
\sum_{i\in\desc(a)}\Pp(i\notin\Cand(a))
\le
|\desc(a)|\big(|\calU(a)|+1\big)\epsilon_{\mathrm{fn}},
\]
which is equivalent to the displayed lower bound for $\Pp(\desc(a)\subseteq{\widehat{\desc}(a)})$.
\end{proof}

\begin{proposition}[FPR bound for Algorithm~\ref{alg:descendant}]
\label{prop:fpr_restatement}
Under Assumptions~\ref{ass:mono}--\ref{ass:diversity}, for every $a\in\calA$ and every non-descendant $i\notin\desc(a)$,
\[
\Pp(\widehat Z_{a,i}=1\mid Z_{a,i}=0)
\le
{\epsilon_{\mathrm{sel}}}.
\]
Equivalently, under the descendant-positive convention,
\[
\FPR_{a,i}\le{\epsilon_{\mathrm{sel}}},
\qquad
{\epsilon_{\mathrm{sel}}=\min\{\epsilon_{\mathrm{fp}},\epsilon_{\mathrm{cx}}\}.}
\]
\end{proposition}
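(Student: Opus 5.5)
This is a restatement of Proposition~\ref{prop:fp} in the descendant-positive convention, so the plan is to re-run that argument directly rather than cite it. Fix $a\in\calA$ and $i\notin\desc(a)$. Since $\widehat Z_{a,i}=\1\{i\in\widehat{\desc}(a)\}$ and $Z_{a,i}=0$ is a deterministic fact about $G$, the conditional probability $\Pp(\widehat Z_{a,i}=1\mid Z_{a,i}=0)$ is just $\Pp(i\in\widehat{\desc}(a))$. Here the randomness is only in the affected sets $\{\calS_b\}$. It therefore suffices to show that $\Pp(i\in\widehat{\desc}(a))$ is bounded both by $\epsilon_{\mathrm{fp}}$ and by $\epsilon_{\mathrm{cx}}$, and then take the minimum.

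\emph{First bound.} In both branches of Algorithm~\ref{alg:descendant}, $\widehat{\desc}(a)=\Cand(a)\subseteq\calS_a$. Hence $\{i\in\widehat{\desc}(a)\}\subseteq\{i\in\calS_a\}$, and Assumption~\ref{ass:mono} gives $\Pp(i\in\calS_a)\le\epsilon_{\mathrm{fp}}$ because $i\notin\desc(a)$.

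\emph{Second bound.} Take the intervention $b\in\calA$ supplied by Assumption~\ref{ass:diversity}, with $a\in\desc(b)$ and $i\notin\desc(b)$. Consider the event $E=\{a\in\calS_b,\ i\notin\calS_b\}$, which has probability at least $1-\epsilon_{\mathrm{cx}}$. On $E$ we have $b\in\calU(a)$ by definition of $\calU(a)$, so $\calU(a)\ne\emptyset$ and the intersection branch is used. Then $\Cand(a)\subseteq\calS_b\not\ni i$, so $i\notin\widehat{\desc}(a)$. This gives $\{i\in\widehat{\desc}(a)\}\subseteq E^c$, and hence $\Pp(i\in\widehat{\desc}(a))\le\epsilon_{\mathrm{cx}}$.

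The only delicate point is that $\calU(a)$ is itself random. It is not fixed in advance, so the argument must check that on $E$ the branch switch happens automatically; this is exactly why both bounds are event inclusions rather than statements about a fixed intersection. No independence between tests is needed, since each bound uses a single event. Combining the two inclusions yields $\FPR_{a,i}\le\min\{\epsilon_{\mathrm{fp}},\epsilon_{\mathrm{cx}}\}=\epsilon_{\mathrm{sel}}$.
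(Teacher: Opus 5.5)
Your proof is correct and matches the paper's argument: the direct bound uses $\widehat{\desc}(a)\subseteq\calS_a$ and Assumption~\ref{ass:mono}, the pruning bound uses the event $\{a\in\calS_b,\ i\notin\calS_b\}$ from Assumption~\ref{ass:diversity}, and the result follows by taking the minimum. Your explicit check that this event puts $b$ into $\calU(a)$, so the intersection branch is active, is a detail the paper leaves implicit.
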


\begin{proof}
Fix $i\notin\desc(a)$.
{Since $i\notin\desc(a)$, Assumption~\ref{ass:mono} gives $\Pp(i\in\calS_a)\le\epsilon_{\mathrm{fp}}$, and Algorithm~\ref{alg:descendant} can include $i$ only if $i\in\calS_a$. Hence $\Pp(i\in\widehat{\desc}(a))\le\epsilon_{\mathrm{fp}}$.}
By Assumption~\ref{ass:diversity}, there exists $b\in\calA$ such that $a\in\desc(b)$, $i\notin\desc(b)$, and
\[
\Pp(a\in\calS_b\ \text{and}\ i\notin\calS_b)
\ge
1-\epsilon_{\mathrm{cx}}.
\]
On this event, $b$ is included in the upstream set $\calU(a)$ while $i$ is absent from $\calS_b$.
Therefore $i$ is removed by the intersection step and $i\notin\Cand(a)$.
Since $\widehat{\desc}(a)=\Cand(a)$ in Algorithm~\ref{alg:descendant},
{this gives the second bound $\Pp(i\in\widehat{\desc}(a))\le\epsilon_{\mathrm{cx}}$. Combining the two bounds yields $\Pp(i\in\widehat{\desc}(a))\le\epsilon_{\mathrm{sel}}$.}
\[
\Pp(\widehat Z_{a,i}=1\mid Z_{a,i}=0)
=
\Pp(i\in\widehat{\desc}(a)\mid i\notin\desc(a))
\le
{\epsilon_{\mathrm{sel}}}.
\]
\end{proof}

\begin{corollary}[Expected contamination under corrected labels]
\label{cor:contamination_restatement}
Fix target $i$ and calibration {split} $\calA_{\mathrm{cal}}$.
Let $\pi_0$ be the fraction of truly {safe} interventions in $\calA_{\mathrm{cal}}${, meaning interventions that do not affect target $i$}:
\[
\pi_0
=
\frac{|\{a\in\calA_{\mathrm{cal}}:Z_{a,i}=0\}|}
{|\calA_{\mathrm{cal}}|}.
\]
Define the worst-case bounds
\[
\FNR_{\max}
=
\max_{a\in\calA_{\mathrm{cal}}}
\big(|\calU(a)|+1\big)\epsilon_{\mathrm{fn}}
=
(\bar u+1)\epsilon_{\mathrm{fn}},
\qquad
\FPR_{\max}
=
{\epsilon_{\mathrm{sel}}}.
\]
Here $\bar u=\max_{a\in\calA_{\mathrm{cal}}}|\calU(a)|$.
Under Assumptions~\ref{ass:mono}--\ref{ass:diversity} and the upstream-ancestor condition in Proposition~\ref{prop:fnr_restatement},
\[
\E[\delta]
\le
\frac{(1-\pi_0)\FNR_{\max}}
{\pi_0(1-\FPR_{\max})+(1-\pi_0)\FNR_{\max}}.
\]
Equivalently,
\[
\E[\delta]
\le
\frac{(1-\pi_0)(\bar u+1)\epsilon_{\mathrm{fn}}}
{\pi_0(1-{\epsilon_{\mathrm{sel}}})+(1-\pi_0)(\bar u+1)\epsilon_{\mathrm{fn}}}.
\]
\end{corollary}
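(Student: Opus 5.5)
The statement to prove is Corollary~\ref{cor:contamination_restatement}: a bound on $\E[\delta]$ built from the per-pair FNR and FPR bounds of Propositions~\ref{prop:fnr_restatement} and~\ref{prop:fpr_restatement}. The plan is to express $\delta$ as a ratio of counts, bound the expected counts, and then transfer those bounds through the monotonicity of the ratio. Passing an expectation through a ratio is the delicate part.

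\textbf{Step 1: write $\delta$ as a ratio of counts.} Fix target $i$ and the calibration split, with $N=|\calA_{\mathrm{cal}}|$. Let $N_1=(1-\pi_0)N$ be the number of calibration interventions that affect $i$, and $N_0=\pi_0 N$ the number that are safe. Define
\[
B=\sum_{a:Z_{a,i}=1}\1\{\widehat Z_{a,i}=0\},
\qquad
G=\sum_{a:Z_{a,i}=0}\1\{\widehat Z_{a,i}=0\},
\]
so that $\delta=B/(B+G)$ whenever $B+G>0$.

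\textbf{Step 2: bound the expected counts.} Proposition~\ref{prop:fnr_restatement}, applied to each pair $(a,i)$ with $i\in\desc(a)$ under the upstream-ancestor condition, gives
\[
\E[B]\le N_1(|\calU(a)|+1)\epsilon_{\mathrm{fn}}\le N_1\FNR_{\max}.
\]
Proposition~\ref{prop:fpr_restatement} gives
\[
\E[G]\ge N_0(1-\FPR_{\max}).
\]

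\textbf{Step 3: pass to the ratio.} The map $(b,g)\mapsto b/(b+g)$ is increasing in $b$ and decreasing in $g$. Substituting the two bounds from Step 2 into this map yields the target expression
\[
\frac{(1-\pi_0)\FNR_{\max}}{\pi_0(1-\FPR_{\max})+(1-\pi_0)\FNR_{\max}},
\]
after dividing numerator and denominator by $N$.

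\textbf{Step 4: justify the expectation–ratio interchange.} This is the main obstacle. In general $\E[B/(B+G)]\neq \E B/(\E B+\E G)$, so Step 3 is only formal as it stands. There are three ways to handle it.

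First, I would follow the paper's own heuristic from Section~\ref{sec:taskdriven}: interpret $\FNR_i$ and $\FPR_i$ as the realized fractions $B/N_1$ and $1-G/N_0$. Then $\delta$ equals the formula exactly, and it remains to check the substitution in Step 3.

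Second, for a rigorous version I would treat the bound as a ratio-of-expectations statement, i.e.\ a bound on $\E[B]/\E[B+G]$, which is what the paper's heuristic actually delivers.

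Third, to get a bound on $\E[\delta]$ itself, I would split on whether $G$ concentrates. If the $\widehat Z_{a,i}$ are independent across $a$, a Hoeffding bound gives $G\ge N_0(1-\FPR_{\max})-t$ with high probability. On that event I use $\delta\le B/(B+g_0)$ with $g_0$ the deterministic lower bound. Since $b\mapsto b/(b+g_0)$ is concave, Jensen's inequality gives $\E[B/(B+g_0)]\le \E B/(\E B+g_0)$, which is the target up to a vanishing slack. On the complementary event I use the trivial bound $\delta\le 1$.

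The concavity-plus-Jensen step on $B$ handles the numerator cleanly. The denominator is the real difficulty, because $G$ only has a lower bound in expectation, and that is why the concentration argument is needed there.

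Finally, I would note that the substitution in Step 3 is legitimate because the numerator bound $N_1\FNR_{\max}$ appears in both the numerator and the denominator. The map $x\mapsto x/(x+c)$ is increasing for $c>0$, so using the upper bound on $B$ in both places is consistent.
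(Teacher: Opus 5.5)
Your Steps 1--3 are exactly the paper's proof. The paper bounds the expected selected affected mass by $(1-\pi_0)\FNR_{\max}$ and the expected selected safe mass from below by $\pi_0(1-\FPR_{\max})$. It then declares $\E[\delta]=A/(S+A)$ ``at the level of expected selected masses'' and invokes monotonicity, which is your second option. You are right that this interchange is the weak point, but none of your three options closes it:
\begin{itemize}
\item \textbf{Option 1} needs the pathwise bounds $B\le N_1\FNR_{\max}$ and $G\ge N_0(1-\FPR_{\max})$. Propositions~\ref{prop:fnr_restatement}--\ref{prop:fpr_restatement} only bound per-pair probabilities.
\item \textbf{Option 2} bounds $\E[B]/(\E[B]+\E[G])$, which is a different quantity from $\E[\delta]$.
\item \textbf{Option 3} is a valid argument, but it imports independence of the $\widehat Z_{a,i}$ across $a$. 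That is not assumed, and it is unnatural for Algorithm~\ref{alg:descendant}, where one $\calS_b$ enters many intersections; this is why Proposition~\ref{prop:prob_delta} uses a dependency graph. Even with independence it only gives
\[
\E[\delta]\le \frac{N_1\FNR_{\max}}{N_1\FNR_{\max}+N_0(1-\FPR_{\max})-t}+e^{-2t^2/N_0},
\]
which is not the stated bound for a fixed calibration split.
\end{itemize}
Two smaller points. In Step 2 the middle expression has a free index $a$; it should read $\sum_{a:Z_{a,i}=1}(|\calU(a)|+1)\epsilon_{\mathrm{fn}}$, and it holds only under the upstream-ancestor condition. The case $B+G=0$ also needs a convention for $\delta$.

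\textbf{The gap cannot be closed.} The displayed inequality for $\E[\delta]$ does not follow from the hypotheses. They constrain only per-pair marginal error probabilities, so errors may be coupled across calibration interventions, as a shared control sample would induce in practice. Here is a concrete counterexample.
\begin{itemize}
\item \textbf{Graph and detection.} Let every affected intervention be a parent of $i$ and every safe one an isolated node. Suppose $a\in\calS_a$ always and $a\notin\calS_b$ for $b\neq a$. Then $\calU(a)=\{a\}$ and $\bar u=1$. Assumption~\ref{ass:diversity} holds with $b=a$, using the convention $a\in\desc(a)$.
\item \textbf{Coupled errors.} On an event of probability $s$, let $i\notin\calS_a$ for every affected $a$ and $i\in\calS_a$ for every safe $a$. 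Off that event, let all tests be correct.
\item \textbf{Outcome.} All assumptions hold with $\epsilon_{\mathrm{fn}}=\epsilon_{\mathrm{fp}}=\epsilon_{\mathrm{cx}}=s$. Since $\delta=1$ on the event and $0$ off it, $\E[\delta]=s$. With $\pi_0=0.9$ and $s=0.1$ the right-hand side is $0.02/0.83\approx 0.024$, yet $\E[\delta]=0.1$.
\end{itemize}
This is the convexity you identified: $g\mapsto b/(b+g)$ is convex, so fluctuations of $G$ push $\E[\delta]$ above the ratio of expectations, especially when they are anticorrelated with $B$.

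What is actually provable is either the ratio-of-expectations bound, which is what you and the paper both establish, or your option~3 stated separately with its concentration hypothesis and explicit slack. A rigorous corollary should take one of those forms.
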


\begin{proof}
Under the descendant-positive convention, contamination occurs exactly when $Z_{a,i}=1$ but $\widehat Z_{a,i}=0$.
Let $A$ denote the expected selected mass among {interventions that truly affect target $i$} and $S$ the expected selected mass among {safe} interventions.
By the per-pair FNR bound in Proposition~\ref{prop:fnr_restatement} and the FPR bound in Proposition~\ref{prop:fpr_restatement},
\[
A\le (1-\pi_0)\FNR_{\max},
\qquad
S\ge \pi_0(1-\FPR_{\max}).
\]
Using the contamination expression from Section~\ref{sec:taskdriven}, $\E[\delta]=A/(S+A)$ at the level of expected selected masses.
This expression is increasing in $A$ and decreasing in $S$, so
\[
\E[\delta]
\le
\frac{(1-\pi_0)\FNR_{\max}}
{\pi_0(1-\FPR_{\max})+(1-\pi_0)\FNR_{\max}}.
\]
Equivalently, at the level of expected or population rates, the selected {safe} mass is at least
\[
\pi_0(1-\FPR_{\max})
\]
and the contaminating selected mass is at most $(1-\pi_0)\FNR_{\max}$.
The bounds on $\FNR_{\max}$ and $\FPR_{\max}$ are exactly Propositions~\ref{prop:fnr_restatement} and~\ref{prop:fpr_restatement}.
The factor $\bar u+1$ counts the own affected set $\calS_a$ together with the $|\calU(a)|$ upstream affected sets in the union bound.
\end{proof}

\begin{remark}[Validity versus efficiency]
With these labels, Proposition~\ref{prop:fnr_restatement} is the validity-relevant statement: false negatives under the descendant-positive convention are {interventions that truly affect target $i$ but are incorrectly admitted} into $\widehat{\calA}(i,a^\star)$, and hence they drive $\delta$ and the coverage loss in Theorem~\ref{thm:delta}. Proposition~\ref{prop:fpr_restatement} is efficiency-relevant: false positives discard safe calibration interventions and can widen intervals, but they do not introduce non-exchangeable scores into the selected calibration set.
\end{remark}
}

{
\section{A High-Probability Bound on the Contamination Fraction}
\label{app:prob_delta}

The correction in Corollary~\ref{cor:corrected} requires an upper bound on the realized contamination fraction
\[
\delta
=\frac{|\widehat{\calA}(i,a^\star)\setminus \calA^\star(i)|}
{|\widehat{\calA}(i,a^\star)|}.
\]
Corollary~\ref{cor:contamination} controls this quantity in expectation.  We next give a high-probability version under an explicit weak-dependence condition on the classification errors across calibration interventions.

\begin{assumption}[Fixed selected size and weak dependence]
\label{ass:weak_dep_delta}
Fix a target $i$ and test intervention $a^\star$ with $Z_{a^\star,i}=0$.
Assume that $|\widehat{\calA}(i,a^\star)|=n$ is fixed by design, or condition on an event on which $|\widehat{\calA}(i,a^\star)|=n$ and the error bounds below hold conditionally.
For each $a\in\calA_{\mathrm{cal}}$, define
\[
W_a
=\1\{a\in\widehat{\calA}(i,a^\star),\ Z_{a,i}=1\}.
\]
There is a dependency graph $H_i$ on $\calA_{\mathrm{cal}}$ with maximum degree $\Delta_i$ such that any two subcollections of $\{W_a:a\in\calA_{\mathrm{cal}}\}$ indexed by vertex sets with no edge between them are independent.
\end{assumption}

Assumption~\ref{ass:weak_dep_delta} is automatic with $\Delta_i=0$ if the descendant-classification errors are independent across interventions.  The dependency-graph form allows the upstream-intersection step in Algorithm~\ref{alg:descendant} to induce local dependence through shared differentially affected sets.
When the selected set size is random and no conditioning is imposed, the same argument applies with $n$ replaced by any deterministic lower bound on $|\widehat{\calA}(i,a^\star)|$.

\begin{proposition}[High-probability contamination bound]
\label{prop:prob_delta}
Under Assumption~\ref{ass:weak_dep_delta}, let
\[
C_i=\sum_{a\in\calA_{\mathrm{cal}}} W_a,
\qquad
\mu_i=\E[C_i],
\qquad
\sigma_i^2=\mathrm{Var}(C_i).
\]
Then $\delta=C_i/n$ and, for every $t>0$,
\[
\Pp\left(\delta>\frac{\mu_i}{n}+t\right)
\le
\frac{\sigma_i^2}{n^2t^2}.
\]
Moreover, if $q_a=\Pp(W_a=1)$, then
\[
\sigma_i^2
\le
\sum_{a\in\calA_{\mathrm{cal}}}q_a(1-q_a)
+2\sum_{\{a,b\}\in E(H_i)}\sqrt{q_a q_b}
\le
(\Delta_i+1)\mu_i.
\]
Consequently, for any $\eta\in(0,1)$, with probability at least $1-\eta$ over the realization of $\widehat Z$,
\[
\delta
\le
\widehat\delta_{\eta}
\equiv
\frac{\mu_i}{n}
+\frac{\sigma_i}{n\sqrt{\eta}}
\le
\frac{\mu_i}{n}
+\frac{\sqrt{(\Delta_i+1)\mu_i}}{n\sqrt{\eta}}.
\]
Under the descendant-positive convention, if
\[
\FNR_a
=\Pp(\widehat Z_{a,i}=0\mid Z_{a,i}=1),
\]
then
\[
\mu_i
=\sum_{a:Z_{a,i}=1}\Pp(\widehat Z_{a,i}=0)
\le
\sum_{a:Z_{a,i}=1}\FNR_a.
\]
\end{proposition}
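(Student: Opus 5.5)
The argument rests on one identity: $\delta = C_i/n$. Under Assumption~\ref{ass:weak_dep_delta} the selected set has fixed size $n$, and $W_a=1$ exactly when $a$ is selected and $Z_{a,i}=1$, i.e.\ $a\in\widehat{\calA}(i,a^\star)\setminus\calA^\star(i)$. Summing $W_a$ over $\calA_{\mathrm{cal}}$ therefore counts the contaminants, so $\delta=C_i/n$. The tail bound is then Chebyshev applied to $C_i$:
\[
\Pp(\delta>\mu_i/n+t)=\Pp(C_i-\mu_i>nt)\le \frac{\sigma_i^2}{n^2t^2}.
\]
If we condition on the size event instead, every expectation and variance is taken conditionally and the argument is unchanged.

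\textbf{Variance bound.} I would expand
\[
\sigma_i^2=\sum_a \mathrm{Var}(W_a)+\sum_{a\ne b}\mathrm{Cov}(W_a,W_b),
\]
with $\mathrm{Var}(W_a)=q_a(1-q_a)$. By the dependency-graph property of Assumption~\ref{ass:weak_dep_delta}, taking singleton subcollections, $W_a$ and $W_b$ are independent whenever $\{a,b\}\notin E(H_i)$, so only edges contribute. For an edge, Cauchy--Schwarz gives
\[
|\mathrm{Cov}(W_a,W_b)|\le\sqrt{q_a(1-q_a)q_b(1-q_b)}\le\sqrt{q_aq_b}.
\]
Each unordered edge appears twice in the ordered sum, which produces the factor $2$ and the first inequality. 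For the second, use $q_a(1-q_a)\le q_a$ and AM--GM, $2\sqrt{q_aq_b}\le q_a+q_b$. Summing over edges, each vertex $a$ picks up $q_a$ once per incident edge, so the total is at most $\Delta_i\sum_a q_a$. Adding the diagonal gives $(\Delta_i+1)\sum_a q_a=(\Delta_i+1)\mu_i$, since $\mu_i=\sum_a q_a$ by linearity.

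\textbf{High-probability statement.} Set $t=\sigma_i/(n\sqrt{\eta})$ in the Chebyshev bound, so the right-hand side equals $\eta$. This gives $\delta\le\widehat\delta_\eta$ with probability at least $1-\eta$, and substituting $\sigma_i\le\sqrt{(\Delta_i+1)\mu_i}$ yields the displayed upper bound. The degenerate case $\sigma_i=0$ needs a remark: then $C_i=\mu_i$ almost surely and the claim holds directly.

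\textbf{Expression for $\mu_i$.} Since $W_a=0$ whenever $Z_{a,i}=0$, and $Z$ is a fixed truth, $\mu_i=\sum_{a:Z_{a,i}=1}\Pp(a\in\widehat{\calA})$. Selection requires $\widehat Z_{a,i}=0$, so $\Pp(a\in\widehat{\calA})\le\Pp(\widehat Z_{a,i}=0)$, with equality for the hard selector. With $Z$ fixed, the conditional probability $\Pp(\widehat Z_{a,i}=0\mid Z_{a,i}=1)$ is just $\Pp(\widehat Z_{a,i}=0)$, which is $\FNR_a$. This gives the final inequality.

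\textbf{Main obstacle.} The delicate step is not any calculation but the interplay between the random selected size and the fixed-$n$ assumption. If we condition on $|\widehat{\calA}|=n$, then $q_a$, $\mu_i$ and $\FNR_a$ must all be read conditionally. The final inequality $\mu_i\le\sum\FNR_a$ then holds only if the $\FNR$ bounds are themselves assumed conditionally, which is exactly what Assumption~\ref{ass:weak_dep_delta} stipulates, and I would state this explicitly in the proof.
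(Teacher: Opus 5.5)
Your proposal is correct and follows the paper's proof essentially step for step: the identity $\delta=C_i/n$, Chebyshev applied to $C_i$, the dependency-graph covariance expansion with Cauchy--Schwarz and $2\sqrt{q_aq_b}\le q_a+q_b$, the choice $t=\sigma_i/(n\sqrt{\eta})$, and $\Pp(W_a=1)\le\FNR_a$. Three points of care go slightly beyond what the paper writes out: the degenerate case $\sigma_i=0$, the conditional reading of all quantities when conditioning on $|\widehat{\calA}(i,a^\star)|=n$, and the fact that equality in the $\mu_i$ expression holds only for the hard selector.
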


\begin{proof}
By definition, $W_a=1$ exactly when calibration intervention $a$ is selected but {truly affects target $i$}. Hence
\[
C_i=\sum_{a\in\calA_{\mathrm{cal}}}W_a
=|\widehat{\calA}(i,a^\star)\setminus \calA^\star(i)|.
\]
Conditioning on $|\widehat{\calA}(i,a^\star)|=n$ gives $\delta=C_i/n$.

Chebyshev's inequality gives, for every $t>0$,
\[
\Pp\left(\delta>\frac{\mu_i}{n}+t\right)
=
\Pp(C_i-\mu_i>nt)
\le
\Pp(|C_i-\mu_i|>nt)
\le
\frac{\sigma_i^2}{n^2t^2}.
\]

It remains to bound $\sigma_i^2$.  Since $W_a$ is Bernoulli,
\[
\mathrm{Var}(W_a)=q_a(1-q_a).
\]
By the dependency-graph assumption, $\mathrm{Cov}(W_a,W_b)=0$ whenever $\{a,b\}\notin E(H_i)$.
For edges of $H_i$, Cauchy--Schwarz gives
\[
|\mathrm{Cov}(W_a,W_b)|
\le
\sqrt{\mathrm{Var}(W_a)\mathrm{Var}(W_b)}
\le
\sqrt{q_a q_b}.
\]
Therefore
\[
\mathrm{Var}(C_i)
\le
\sum_a q_a(1-q_a)
+2\sum_{\{a,b\}\in E(H_i)}\sqrt{q_a q_b}.
\]
Using $2\sqrt{q_a q_b}\le q_a+q_b$ and the fact that each vertex has degree at most $\Delta_i$,
\[
2\sum_{\{a,b\}\in E(H_i)}\sqrt{q_a q_b}
\le
\sum_{\{a,b\}\in E(H_i)}(q_a+q_b)
\le
\Delta_i\sum_a q_a
=\Delta_i\mu_i.
\]
Since $\sum_a q_a(1-q_a)\le\sum_a q_a=\mu_i$, we obtain
$\sigma_i^2\le(\Delta_i+1)\mu_i$.
The displayed $1-\eta$ bound follows by taking $t=\sigma_i/(n\sqrt{\eta})$.
Finally, if $Z_{a,i}=0$ then $W_a=0$ deterministically; if $Z_{a,i}=1$ then
$W_a=1$ implies $\widehat Z_{a,i}=0$, so
$\Pp(W_a=1)\le\Pp(\widehat Z_{a,i}=0\mid Z_{a,i}=1)=\FNR_a$.
\end{proof}

\begin{corollary}[High-probability corrected selective conformal]
\label{cor:hp_corrected}
Fix $\eta\in(0,1)$ and suppose the assumptions of Proposition~\ref{prop:prob_delta} hold.
Let $\widehat\delta_{\eta}$ be any deterministic quantity satisfying
\[
\Pp(\delta\le\widehat\delta_{\eta})\ge 1-\eta.
\]
For example, one may use the bound in Proposition~\ref{prop:prob_delta}, with $\mu_i$ further upper bounded by the FNR bounds from Proposition~\ref{prop:fnr_restatement}.
Then, with probability at least $1-\eta$ over the realized descendant estimator $\widehat Z$, the split conformal interval computed at nominal miscoverage
\[
\alpha'=\alpha-g(\widehat\delta_{\eta},n)
\]
has conditional coverage at least $1-\alpha$ for $R_i^{(a^\star)}$.
If $\alpha'\le0$, the corrected interval is taken to be $(-\infty,\infty)$ and the claim is trivial.
\end{corollary}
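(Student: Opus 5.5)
The plan is to combine the high-probability contamination bound with the deterministic worst-case coverage argument of Theorem~\ref{thm:delta}, applied conditionally on the realized selector. Let $E=\{\delta\le\widehat\delta_\eta\}$, so that $\Pp(E)\ge 1-\eta$ by hypothesis; Proposition~\ref{prop:prob_delta} supplies one such $\widehat\delta_\eta$. The statement to prove is conditional: on $E$, the coverage probability, taken over the calibration and test scores given the realized $\widehat Z$, is at least $1-\alpha$.

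\textbf{Step 1: condition on the selector.} Fix a realization of $\widehat Z$, and hence of $\widehat{\calA}(i,a^\star)$, with $|\widehat{\calA}|=n$ and $\delta\le\widehat\delta_\eta$. To apply Theorem~\ref{thm:delta} conditionally, I need the good scores $\{R_i^{(a)}:a\in\widehat{\calA}\cap\calA^\star\}$ together with $R_i^{(a^\star)}$ to remain exchangeable after conditioning on $\widehat Z$. I will justify this with a mild independence requirement: $\widehat Z$ is computed from data (DEG sets) independent of the calibration and test scores, as in the synthetic setup. Then conditioning on $\widehat Z$ selects a fixed subset $G\subseteq\calA^\star$, and a fixed subset of an exchangeable collection together with the test score is still exchangeable. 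I expect this to be the main obstacle, because Assumption~\ref{ass:selective_exchange} alone does not license conditioning on a data-dependent selection. If needed, I will state this independence explicitly as the interpretation of ``conditional coverage.''

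\textbf{Step 2: apply Theorem~\ref{thm:delta} at level $\alpha'$.} Given the conditioning, the theorem's proof goes through verbatim with $\alpha$ replaced by $\alpha'$. The bad scores are handled adversarially, which needs no assumption on their distribution, even conditionally. This yields conditional coverage at least $1-\alpha'-g(\delta,n)$.

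\textbf{Step 3: monotonicity and algebra.} Next I check that $g(\delta,n)=\delta n/((1-\delta)n+1)$ is nondecreasing in $\delta$: the numerator increases and the denominator decreases in $\delta$. On $E$ this gives $g(\delta,n)\le g(\widehat\delta_\eta,n)$, so coverage is at least $1-\alpha'-g(\widehat\delta_\eta,n)=1-\alpha$, exactly as in the proof of Corollary~\ref{cor:corrected}. If $\alpha'\le0$, the interval is the whole real line and coverage is $1$. Since $\widehat\delta_\eta$ is deterministic, $\alpha'$ does not depend on the realization, so the same interval rule applies on every realization in $E$. Since $\Pp(E)\ge1-\eta$, the claim holds with probability at least $1-\eta$ over $\widehat Z$.
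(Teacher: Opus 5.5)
Your proposal is correct and follows the paper's route. You restrict to the event $\{\delta\le\widehat\delta_\eta\}$, invoke Corollary~\ref{cor:corrected} there (Theorem~\ref{thm:delta} at level $\alpha'$ plus monotonicity of $g$), and then use $\Pp(\delta\le\widehat\delta_\eta)\ge 1-\eta$. Your extra requirement that $\widehat Z$ be independent of the good and test scores, so that exchangeability survives conditioning on the selector, is what the paper's one-line proof uses implicitly when it claims \emph{conditional} coverage; stating it explicitly strengthens the argument rather than leaving a gap.
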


\begin{proof}
On the event $\{\delta\le\widehat\delta_{\eta}\}$, Corollary~\ref{cor:corrected} applies with upper bound $\widehat\delta_{\eta}$ and gives conditional coverage at least $1-\alpha$.
Proposition~\ref{prop:prob_delta} gives this event probability at least $1-\eta$.
\end{proof}

\begin{remark}[When is the bound informative?]
The Chebyshev correction is useful when the additive term $\sigma_i/(n\sqrt{\eta})$ is small compared with the target slack in $\delta$.
For a relative bound of the form $\delta\le(1+c)\E[\delta]$, Chebyshev gives failure probability at most
\[
\frac{\mathrm{Var}(\delta)}{c^2{\E[\delta]}^2}
=
\frac{\sigma_i^2}{c^2\mu_i^2}
\le
\frac{\Delta_i+1}{c^2\mu_i}.
\]
Thus relative concentration requires $\mu_i\gg\Delta_i$, i.e., the expected number of contaminating selected interventions must dominate the local dependence degree.
For coverage correction, however, the relevant requirement is often absolute rather than relative:
\[
\frac{\sqrt{(\Delta_i+1)\mu_i}}{n\sqrt{\eta}}\ll 1.
\]
This can hold even when $\E[\delta]=\mu_i/n$ is small, provided the selected calibration set is large and the dependency graph is sparse.  If the upstream-intersection errors are highly coupled across many interventions, so that $\Delta_i$ is of the same order as $|\calA_{\mathrm{cal}}|$, this Chebyshev bound becomes conservative; sharper concentration would require a more detailed model of the dependence induced by Algorithm~\ref{alg:descendant}.
\end{remark}
}

\section{Algorithm 2: Local ICP for Distance Estimation}\label{app:alg_icp}

\begin{algorithm}[h]
\caption{Local ICP for Distance Estimation}
\label{alg:local_icp}
\begin{algorithmic}[1]
\footnotesize
\REQUIRE Target gene $i$, interventions $\calA$, data $\{Y^{(a)}\}_{a\in\calA}$, maximum depth $D$
\ENSURE Estimated distance $\widehat d(a,i)$ for each $a\in\calA$
\STATE $\calP_0 \gets$ candidate parent pool via correlation screening or prior network
  \STATE $\widehat{\mathrm{Pa}}(i) \gets \mathrm{ICP}(i, \calP_0, \calA)$ \COMMENT{Invariant parent set}
  \STATE $\calB_0 \gets \{i\}$
  \FOR{$t = 1,\ldots,D$}
    \STATE $\calB_t \gets \calB_{t-1}$
    \STATE \COMMENT{$\calB_{-1}=\emptyset$}
    \FOR{each $g\in\calB_{t-1}\setminus\calB_{t-2}$}
      \STATE $\calP_0^{(g)} \gets$ candidate parent pool for $g$
      \STATE $\widehat{\mathrm{Pa}}(g) \gets \mathrm{ICP}(g, \calP_0^{(g)}, \calA)$
      \STATE $\calB_t \gets \calB_t \cup \widehat{\mathrm{Pa}}(g)$
    \ENDFOR
  \ENDFOR
\FOR{each $a\in\calA$}
  \STATE $\widehat d(a,i) \gets \min\{t: a\in\calB_t\}$ \COMMENT{$\infty$ if $a\notin\calB_D$}
\ENDFOR
\RETURN $\{\widehat d(a,i)\}_{a\in\calA}$
\end{algorithmic}
\end{algorithm}

The ICP subroutine works as follows: for a target gene $g$ and candidate parent pool $\calP_0^{(g)}$, test subsets $S\subseteq\calP_0^{(g)}$ for invariance of the regression residual $Y_g - \hat\beta_S^\top X_S$ across environments (interventions), and return a minimal invariant set as $\widehat{\mathrm{Pa}}(g)$.
The frontier expansion traces back from $i$ through estimated parents, yielding a coarse path-length estimate without learning the full graph.
{For runtime accounting, $\calP_0^{(g)}$ denotes the screened candidate parent pool used when ICP is called for gene $g$; the initial pool $\calP_0$ in Algorithm~\ref{alg:local_icp} is $\calP_0^{(i)}$. Let $K_0=\max_g|\calP_0^{(g)}|$, let $k_{\mathrm{ICP}}$ be the maximum parent-set size searched by the ICP subroutine, let $D$ be the maximum backward search depth, and let $M_D(i)=|\cup_{t=0}^D\calB_t|$ be the number of genes visited during the local expansion from target $i$. Each ICP call examines $O(K_0^{k_{\mathrm{ICP}}})$ candidate subsets, so the subset-search cost is $O(M_D(i)K_0^{k_{\mathrm{ICP}}})$, up to the cost of the individual invariance tests. In the single-gene perturbation setting with the expansion restricted to intervention-indexed genes, the shorthand bound is $O(|\calA|D K_0^{k_{\mathrm{ICP}}})$. Thus correlation screening makes the runtime depend on the screened pool size $K_0$ and the local visited set, rather than directly on the ambient number of genes $p$.}

\textbf{Using distance for weighted calibration.}
Given distance estimates $\widehat d(a,i)$, one can define weighted conformal calibration:
\[
w(a) = K\!\left(\frac{\widehat d(a,i)}{h}\right),
\]
where $K$ is a kernel function and $h$ is a bandwidth parameter.
This provides a smooth interpolation between selective calibration (hard thresholding on $\widehat Z$) and pooled calibration, potentially improving efficiency when the binary classification is uncertain.

{
\section{Consistency of Algorithm~\ref{alg:local_icp}}
\label{app:local_icp_consistency}

We give a recovery guarantee for the Local ICP distance estimator in Algorithm~\ref{alg:local_icp}.  Recall that the algorithm starts from $\calB_0=\{i\}$ and recursively expands backward through estimated invariant parent sets.  It returns
\[
\widehat d(a,i)=\min\{t:\ a\in\calB_t\},
\]
with $\widehat d(a,i)=\infty$ if $a\notin\calB_D$.

For a node $g$, let $\mathrm{Pa}(g)$ denote its true parent set in the underlying DAG $G$, and let $d(a,i)$ be the length of the shortest directed path from $a$ to $i$, with $d(a,i)=\infty$ if no such path exists.  Define
\[
\mathrm{Anc}^{\le D}(i)
=\{g:\ d(g,i)\le D\},
\]
where $i\in\mathrm{Anc}^{\le D}(i)$ with $d(i,i)=0$.

\begin{proposition}[Local ICP distance recovery]
\label{prop:local_icp_recovery}
Assume:
\begin{enumerate}
\item the data are generated from a causally sufficient DAG $G$;
\item interventional faithfulness holds, so invariant causal prediction identifies the true invariant parent set in the population;
\item for every gene $g$ that can be visited by Algorithm~\ref{alg:local_icp}, the candidate parent pool covers the true parents,
\[
\calP_0^{(g)}\supseteq \mathrm{Pa}(g);
\]
\item the ICP subroutine satisfies the finite-sample recovery guarantee
\[
\Pp\big(\widehat{\mathrm{Pa}}(g)=\mathrm{Pa}(g)\big)\ge 1-\beta_g
\]
for every $g\in\mathrm{Anc}^{\le D}(i)$.
\end{enumerate}
Then, with probability at least
\[
1-\sum_{g\in\mathrm{Anc}^{\le D}(i)}\beta_g,
\]
Algorithm~\ref{alg:local_icp} recovers the exact truncated ancestor balls:
\[
\calB_t=\{g:\ d(g,i)\le t\}
\qquad\text{for all }t=0,\ldots,D.
\]
Consequently, for every $a\in\calA$ with $d(a,i)\le D$,
\[
\Pp\big(\widehat d(a,i)=d(a,i)\big)
\ge
1-\sum_{g\in\mathrm{Anc}^{\le D}(i)}\beta_g,
\]
and for every $a\in\calA$ with $d(a,i)>D$,
\[
\Pp\big(\widehat d(a,i)=\infty\big)
\ge
1-\sum_{g\in\mathrm{Anc}^{\le D}(i)}\beta_g.
\]
\end{proposition}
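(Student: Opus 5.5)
The plan is to define a single good event and show by induction on $t$ that the frontier sets coincide with the true ancestor balls on that event. Let
\[
\mathcal{E}=\bigcap_{g\in\mathrm{Anc}^{\le D}(i)}\{\widehat{\mathrm{Pa}}(g)=\mathrm{Pa}(g)\}.
\]
Assumptions 2--4 give $\Pp(\widehat{\mathrm{Pa}}(g)\neq\mathrm{Pa}(g))\le\beta_g$ for each $g\in\mathrm{Anc}^{\le D}(i)$. Assumption 3 matters here: it ensures the pool contains $\mathrm{Pa}(g)$, so the ICP guarantee is not vacuous. A union bound then gives $\Pp(\mathcal{E})\ge 1-\sum_{g\in\mathrm{Anc}^{\le D}(i)}\beta_g$. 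Everything else is deterministic on $\mathcal{E}$.

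On $\mathcal{E}$ I would prove $\calB_t=\{g: d(g,i)\le t\}$ for $t=0,\dots,D$ by induction, using the convention $\calB_{-1}=\emptyset$.
\begin{itemize}
\item \textbf{Base case.} $\calB_0=\{i\}$, and $d(g,i)=0$ holds iff $g=i$.
\item \textbf{Inductive step.} Assume the claim for $t-1$ and $t-2$. The genes processed at step $t$ are $\calB_{t-1}\setminus\calB_{t-2}=\{g:d(g,i)=t-1\}$, all of which lie in $\mathrm{Anc}^{\le D}(i)$ since $t-1\le D$. On $\mathcal{E}$ their estimated parents are their true parents, so
\[
\calB_t=\{g:d(g,i)\le t-1\}\cup\bigcup_{g:\,d(g,i)=t-1}\mathrm{Pa}(g).
\]
It remains to show this equals $\{h: d(h,i)\le t\}$.
\item \textbf{Inclusion $\subseteq$.} A parent $h$ of a node $g$ with $d(g,i)=t-1$ satisfies $d(h,i)\le t$ by prepending the edge $h\to g$ to a shortest path.
\item \textbf{Inclusion $\supseteq$.} If $d(h,i)=t\ge 1$, take a shortest path $h\to g\to\cdots\to i$. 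Its second node $g$ has $d(g,i)=t-1$ exactly: it is at most $t-1$ by the path, and at least $t-1$, since otherwise $d(h,i)<t$. Hence $h\in\mathrm{Pa}(g)$ with $g$ processed at step $t$.
\end{itemize}

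The one point needing care is that the algorithm expands only the new frontier $\calB_{t-1}\setminus\calB_{t-2}$, not all of $\calB_{t-1}$. The argument above shows this loses nothing: every node at distance exactly $t$ has a successor on a shortest path at distance exactly $t-1$, so it is reached from the new frontier. I would also check that every gene on which ICP is actually invoked lies in $\mathrm{Anc}^{\le D}(i)$, so the event $\mathcal{E}$ covers all ICP calls. The induction shows this: the processed genes always have distance at most $D-1$.

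The distance consequences then follow on $\mathcal{E}$.
\begin{itemize}
\item If $d(a,i)\le D$, then $a\in\calB_t$ iff $t\ge d(a,i)$, so $\widehat d(a,i)=\min\{t:a\in\calB_t\}=d(a,i)$.
\item If $d(a,i)>D$, including $d(a,i)=\infty$, then $a\notin\calB_D$, so $\widehat d(a,i)=\infty$.
\end{itemize}
Both probability statements inherit the bound $\Pp(\mathcal{E})\ge 1-\sum_{g\in\mathrm{Anc}^{\le D}(i)}\beta_g$.
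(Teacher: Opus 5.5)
Your proposal is correct and follows essentially the same argument as the paper. Both define the good event that ICP returns the true parents for every $g\in\mathrm{Anc}^{\le D}(i)$, apply a union bound, and then run a deterministic induction showing that the frontier expansion reproduces the truncated ancestor balls; your explicit use of the hypothesis at both $t-1$ and $t-2$, and your check that every ICP call falls in $\mathrm{Anc}^{\le D}(i)$, make precise two points the paper leaves implicit.
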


\begin{proof}
Let
\[
E_i=\bigcap_{g\in\mathrm{Anc}^{\le D}(i)}
\{\widehat{\mathrm{Pa}}(g)=\mathrm{Pa}(g)\}.
\]
By a union bound,
\[
\Pp(E_i)
\ge
1-\sum_{g\in\mathrm{Anc}^{\le D}(i)}\beta_g.
\]
It is enough to prove the deterministic claim on $E_i$.

We proceed by induction on $t$.
For $t=0$, Algorithm~\ref{alg:local_icp} initializes $\calB_0=\{i\}$, which equals $\{g:d(g,i)\le0\}$.
Assume for some $t-1<D$ that
\[
\calB_{t-1}=\{g:\ d(g,i)\le t-1\}.
\]
The new frontier is $\calB_{t-1}\setminus\calB_{t-2}$, which by the induction hypothesis is exactly the set of nodes at distance $t-1$ from $i$.
For each such node $g$, on the event $E_i$ the ICP call returns
\[
\widehat{\mathrm{Pa}}(g)=\mathrm{Pa}(g).
\]
Therefore the update
\[
\calB_t=\calB_{t-1}\cup
\bigcup_{g\in\calB_{t-1}\setminus\calB_{t-2}}\widehat{\mathrm{Pa}}(g)
\]
equals
\[
\{g:\ d(g,i)\le t-1\}
\cup
\bigcup_{h:\ d(h,i)=t-1}\mathrm{Pa}(h).
\]
Every parent of a node at distance $t-1$ has a directed path of length $t$ to $i$, so the right-hand side is contained in $\{g:d(g,i)\le t\}$.
Conversely, if $u$ satisfies $d(u,i)=t$, then there exists a shortest directed path
\[
u\to h\to\cdots\to i
\]
where $d(h,i)=t-1$ and $u\in\mathrm{Pa}(h)$.  Since $h$ is in the frontier at depth $t-1$, the algorithm adds $u$ at step $t$.
Thus $\calB_t=\{g:d(g,i)\le t\}$.

The induction proves the exact recovery of all truncated ancestor balls through depth $D$ on $E_i$.
If $d(a,i)\le D$, then $a$ first appears in $\calB_{d(a,i)}$ and does not appear in any earlier $\calB_t$, so $\widehat d(a,i)=d(a,i)$.
If $d(a,i)>D$, then $a\notin\calB_D$, so the algorithm returns $\widehat d(a,i)=\infty$.
Combining these deterministic implications with the lower bound on $\Pp(E_i)$ proves the result.
\end{proof}

\begin{corollary}[Finite-sample consistency]
\label{cor:local_icp_consistency}
Suppose the assumptions of Proposition~\ref{prop:local_icp_recovery} hold and that, for each fixed $g\in\mathrm{Anc}^{\le D}(i)$, the ICP error probability satisfies
\[
\beta_g=\beta_g(n_{\min})\to0
\qquad\text{as}\qquad
n_{\min}=\min_{a\in\calA} n_a\to\infty,
\]
where $n_a$ is the sample size under intervention $a$.
If $|\mathrm{Anc}^{\le D}(i)|<\infty$, then for every $a\in\calA$,
\[
\widehat d(a,i)\ \xrightarrow{p}\
\begin{cases}
d(a,i), & d(a,i)\le D,\\
\infty, & d(a,i)>D.
\end{cases}
\]
\end{corollary}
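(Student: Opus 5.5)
The plan is to reduce Corollary~\ref{cor:local_icp_consistency} to Proposition~\ref{prop:local_icp_recovery} and then take limits. Fix $i$ and $a\in\calA$. The assumptions of Proposition~\ref{prop:local_icp_recovery} hold for each $n_{\min}$. So on the event
\[
E_i=\bigcap_{g\in\mathrm{Anc}^{\le D}(i)}\{\widehat{\mathrm{Pa}}(g)=\mathrm{Pa}(g)\}
\]
the estimator is exact: $\widehat d(a,i)=d(a,i)$ when $d(a,i)\le D$, and $\widehat d(a,i)=\infty$ otherwise. Moreover, $\Pp(E_i^c)\le\sum_{g\in\mathrm{Anc}^{\le D}(i)}\beta_g(n_{\min})$.

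The key step is that this sum is finite and tends to zero. The index set $\mathrm{Anc}^{\le D}(i)$ is a fixed finite set: it is determined by the DAG $G$ and $D$, not by the sample size, and it is assumed finite. Each $\beta_g(n_{\min})\to 0$. A finite sum of sequences tending to zero tends to zero, so $\Pp(E_i^c)\to 0$ as $n_{\min}\to\infty$. This is the only place the finiteness hypothesis is used. Without it, the union bound could fail to vanish even though each $\beta_g\to0$.

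Next comes convergence in probability. Because $\widehat d(a,i)$ and its limit both take values in the discrete set $\{0,1,\ldots,D,\infty\}$, I will interpret convergence in probability as
\[
\Pp\big(\widehat d(a,i)\ne d^\dagger(a,i)\big)\to0,
\]
where $d^\dagger(a,i)=d(a,i)$ if $d(a,i)\le D$ and $d^\dagger(a,i)=\infty$ otherwise. This agrees with any metric on the extended naturals that makes the points isolated, for instance after mapping $\infty$ to $D+1$. For every $\varepsilon\in(0,1)$, the event $\{|\widehat d-d^\dagger|>\varepsilon\}$ lies inside $E_i^c$, so its probability is at most $\Pp(E_i^c)\to0$. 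This establishes the two cases of the display simultaneously, and the conclusion holds for every $a\in\calA$ because the bound does not depend on $a$.

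I expect no real obstacle here. The one point to handle carefully is the meaning of convergence in probability to the value $\infty$, which the discrete interpretation above settles. One should also note that the structural hypotheses (a causally sufficient DAG, faithfulness, and pool coverage) do not change with sample size. This is what guarantees that $\mathrm{Anc}^{\le D}(i)$ and the set of genes the algorithm can visit stay fixed as $n_{\min}$ grows, so Proposition~\ref{prop:local_icp_recovery} applies uniformly along the sequence.
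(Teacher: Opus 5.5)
Your proposal is correct and follows the paper's proof: invoke the union bound $\Pp(E_i^c)\le\sum_{g\in\mathrm{Anc}^{\le D}(i)}\beta_g(n_{\min})$ from Proposition~\ref{prop:local_icp_recovery}, then note that a finite sum of vanishing terms vanishes. Your explicit reading of convergence in probability on the discrete value set $\{0,1,\ldots,D,\infty\}$ as $\Pp(\widehat d(a,i)\neq d^\dagger(a,i))\to 0$ is a useful clarification that the paper leaves implicit.
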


\begin{proof}
By Proposition~\ref{prop:local_icp_recovery}, the probability of any truncated-distance recovery error is at most
\[
\sum_{g\in\mathrm{Anc}^{\le D}(i)}\beta_g(n_{\min}).
\]
The ancestor set is finite and each summand converges to zero, so the sum converges to zero.
\end{proof}

\begin{remark}[Implication for weighted conformal calibration]
Appendix~\ref{app:alg_icp} proposes kernel weights
\[
w(a)=K\!\left(\frac{\widehat d(a,i)}{h}\right)
\]
for distance-weighted calibration.  Proposition~\ref{prop:local_icp_recovery} shows that, under the stated ICP recovery conditions, these estimated weights equal the oracle weights $K(d(a,i)/h)$ with probability tending to one as the ICP subroutine becomes consistent.  Therefore any exchangeability or weighted-exchangeability guarantee proved for the oracle distance weights is inherited asymptotically by the Local ICP weights.  This does not by itself give a new finite-sample weighted-conformal theorem; it reduces the additional error from estimating $d(a,i)$ to the ICP recovery probability in Proposition~\ref{prop:local_icp_recovery}.
\end{remark}
}

\section{Additional Experimental Results}\label{app:extra}

\subsection{{Procedural definition of table methods}}\label{app:method_definitions}

{For a fixed test intervention--target pair $(a^\star,i)$, let $r_a=R_i^{(a)}$ be the calibration score for intervention $a\in\calA_{\mathrm{cal}}$ and let $r^\star=R_i^{(a^\star)}$ be the test score. For any selected calibration set $S\subseteq\calA_{\mathrm{cal}}$, split conformal uses the order statistic}
\[
{
\widehat q_\alpha(S)
=\text{the }\left\lceil(|S|+1)(1-\alpha)\right\rceil\text{-th smallest value in }
\{r_a:a\in S\}\cup\{\infty\}.}
\]
{Coverage for the score-based experiments is the indicator $\1\{r^\star\le \widehat q_\alpha(S)\}$ and width is $2\widehat q_\alpha(S)$.}
{Here $S$ is only a local shorthand for a selected intervention set: $S_{\mathrm{oracle}}(i)=\calA^\star(i)$ and $S_{\mathrm{est}}(i)=\widehat{\calA}(i,a^\star)$ in the notation of the main text.}

{\textbf{Oracle.}
The oracle selector uses the truly safe calibration set $S_{\mathrm{oracle}}(i)=\{a\in\calA_{\mathrm{cal}}:Z_{a,i}=0\}$. In synthetic experiments $Z$ is known from the simulated DAG. In the real-data experiment this row is only a proxy oracle, with $Z$ defined by the top-$10\%$ absolute log-fold-change rule.}

{\textbf{Estimated.}
The estimated selector uses Algorithm~\ref{alg:descendant} to form $\widehat Z$ and selects $S_{\mathrm{est}}(i)=\{a\in\calA_{\mathrm{cal}}:\widehat Z_{a,i}=0\}$. If the selected set is too small to yield a finite split-conformal quantile, the implementation falls back to pooled calibration.}

{\textbf{Pooled.}
The pooled baseline ignores causal strata and uses all calibration interventions, $S_{\mathrm{pooled}}=\calA_{\mathrm{cal}}$. It is included as a coarse baseline rather than as the target exchangeable set for fixed safe test pairs.}

{\textbf{Corrected.}
The corrected method uses $S_{\mathrm{est}}$ but replaces $\alpha$ by $\alpha'=\alpha-g(\widehat\delta,|S_{\mathrm{est}}|)$, where {$\widehat\delta$ is treated as an externally supplied upper bound; in the experiments we plug in realized synthetic contamination or proxy real-data contamination}. The formal correction returns an infinite interval when $\alpha'\le0$; the experiments report finite-width behavior under the clipped implementation described in the main text.}

{\textbf{Control-coexpression weighted CP.}
For perturbation $a$, let $g(a)$ be its target gene and let $x_a=(\log(1+X_{c,g(a)}))_{c\in\mathrm{control}}$ be the vector of control-cell expression values for that gene. We compute a correlation distance between $x_a$ and $x_{a^\star}$ and assign Gaussian kernel weights to calibration interventions, with bandwidth set by the tenth nearest calibration perturbation. This baseline is leakage-free because it uses only control-cell expression of the perturbed target genes, not the held-out perturbation's response vector. The reported $n_{\mathrm{cal}}$ is the Kish effective sample size of the weights.}

{\textbf{Full-graph proxy.}
This baseline constructs a proxy gene graph from correlations among perturbation-level LFC profiles, thresholds the top $1\%$ of absolute correlations to form a skeleton, orients edges by perturbation-level variance, and uses graph descendants as affected labels before applying the same split-conformal procedure as Estimated.}

{\textbf{Observational-only selection.}
This heuristic also uses the correlation matrix but does not run Algorithm~\ref{alg:descendant}. For each perturbed target, it labels the most correlated genes as affected, with the number of selected genes matched to the average descendant-set size produced by Algorithm~\ref{alg:descendant}, and then applies split conformal to the remaining calibration interventions.}

\begin{figure}[h]
\centering
\includegraphics[width=0.6\linewidth]{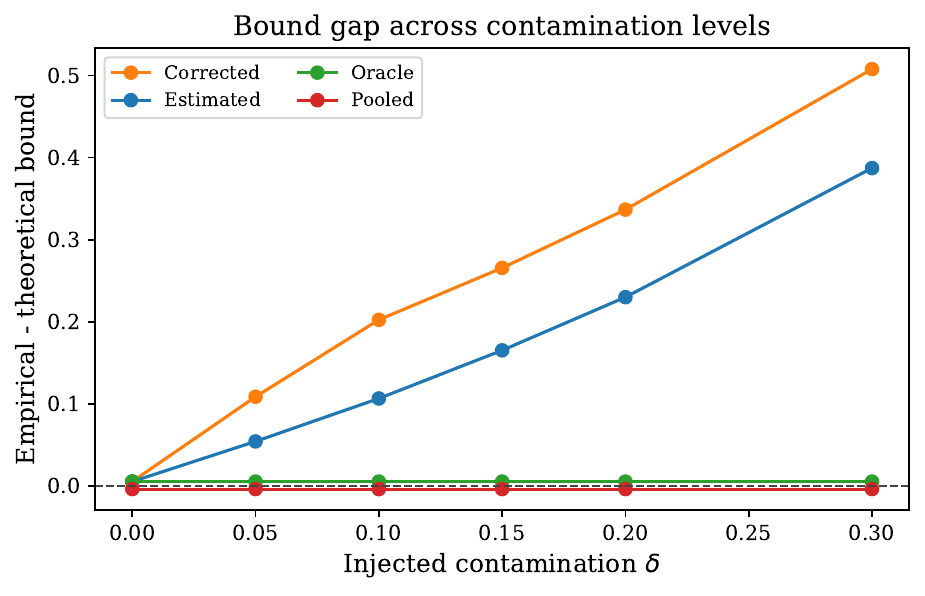}
\caption{Gap between empirical coverage and the theoretical lower bound from Theorem~\ref{thm:delta}. All values are non-negative for the selective methods (Oracle, Estimated, Corrected), confirming the bound is valid. Pooled shows a small negative gap ($\approx -0.004$) because it uses all calibration points without selection, so the selective coverage theorem does not apply. The gap grows with $\delta$ because worst-case adversarial contamination does not occur in practice.}
\label{fig:bound_gap}
\end{figure}

\begin{table}[h]
\centering
\caption{Controlled $\delta$ ablation. Coverage and width by injected contamination level. Corrected maintains $\ge 0.95$ coverage at all nonzero contamination levels ($\delta\ge 0.05$) {when supplied with the injected $\delta$} {under the clipped finite-width implementation}. Oracle and Pooled widths (constant at $3.38$ and {$3.29$} respectively) are omitted, {so the displayed width rows are interpreted as the uncorrected--corrected tradeoff rather than a complete width ranking}.}
\label{tab:delta}
\footnotesize
\begin{tabular}{lcccccc}
\toprule
& \multicolumn{6}{c}{$\delta_{\mathrm{inject}}$} \\
\cmidrule(lr){2-7}
Method & $0.00$ & $0.05$ & $0.10$ & $0.15$ & $0.20$ & $0.30$ \\
\midrule
\multicolumn{7}{l}{\emph{Coverage}} \\
Oracle & $\okcov{.905}$ & $\okcov{.905}$ & $\okcov{.905}$ & $\okcov{.905}$ & $\okcov{.905}$ & $\okcov{.905}$ \\
Estimated & $\okcov{.905}$ & $\okcov{.901}$ & $.895$ & $.889$ & $.882$ & $.867$ \\
Pooled & $.896$ & $.896$ & $.896$ & $.896$ & $.896$ & $.896$ \\
Corrected & $\okcov{.905}$ & $\okcov{.955}$ & $\okcov{.990}$ & $\okcov{.990}$ & $\okcov{.989}$ & $\okcov{.988}$ \\
\midrule
\multicolumn{7}{l}{\emph{Width}} \\
Estimated & $3.38$ & $3.33$ & $3.28$ & $3.22$ & $3.16$ & $3.03$ \\
Corrected & $3.38$ & $4.09$ & $5.52$ & $5.48$ & $5.44$ & $5.35$ \\
\bottomrule
\end{tabular}
\end{table}

\textbf{Feasibility versus calibration-set size.}
{The corrected procedure's $59.8\%$ feasibility on the real-data split (Table~\ref{tab:real}) reflects the small calibration set rather than a methodological limit. Re-running the Replogle K562 evaluation while scaling the calibration-set size shows feasibility climbing to $100\%$ by $n_{\mathrm{cal}}=160$, with the {proxy contamination fraction $\delta$} stable near $0.083$ (Table~\ref{tab:feasibility}, Figure~\ref{fig:feasibility}). {All rows in this scaling study are produced by bootstrap-resampling the selected real calibration residuals to the target $n_{\mathrm{cal}}$, rather than by adding real perturbations,} so their coverage and width should be read as indicative; feasibility, a function of {$g(\delta,n)$ using this proxy value} versus $\alpha$, is robust to this expansion.}

\begin{figure}[h]
\centering
\includegraphics[width=0.6\linewidth]{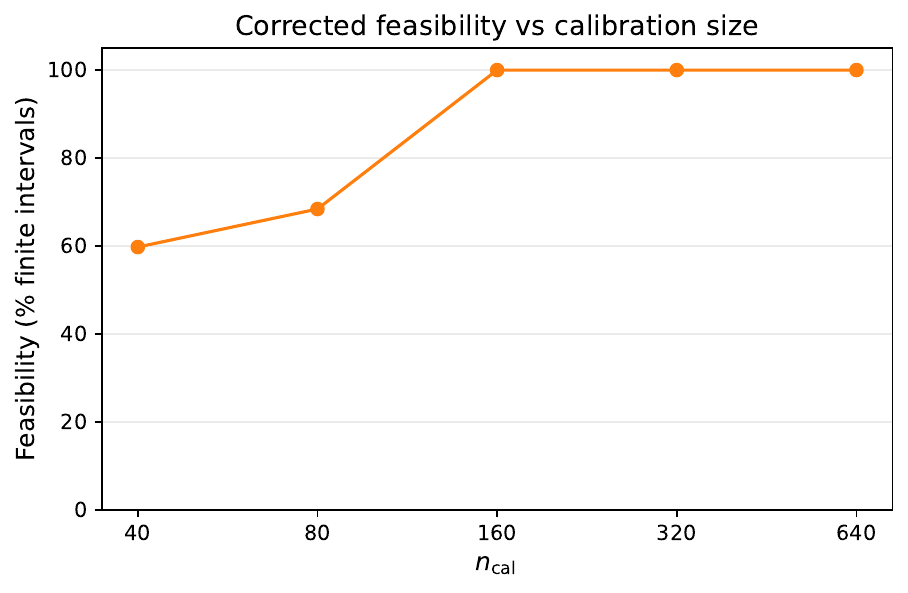}
\caption{{Feasibility (fraction of finite intervals) of the Corrected procedure versus calibration-set size $n_{\mathrm{cal}}$ on the Replogle K562 split. Feasibility rises from $0.598$ at $n_{\mathrm{cal}}=40$ to $1.0$ by $n_{\mathrm{cal}}=160$.}}
\label{fig:feasibility}
\end{figure}

\begin{table}[h]
\centering
\caption{{Corrected procedure versus calibration-set size $n_{\mathrm{cal}}$ on the Replogle K562 split. {Rows use bootstrap-resampled calibration residuals at the displayed target sizes.} Feasibility reaches $100\%$ by $n_{\mathrm{cal}}=160$ while {proxy $\delta$} stays near $0.083$. {Width highlighting ranks only rows that also exceed nominal coverage.}}}
\label{tab:feasibility}
\footnotesize
\begin{tabular}{lcccc}
\toprule
{$n_{\mathrm{cal}}$} & {Coverage} & {Width} & {Feasibility} & {{proxy $\delta$}} \\
\midrule
{$40$} & {$0.890$} & {$0.316$} & {$0.598$} & {$0.083$} \\
{$80$} & {$0.898$} & {$0.338$} & {$0.684$} & {$0.083$} \\
{$160$} & {$\okcov{0.928}$} & {$0.520$} & {$1.000$} & {$0.083$} \\
{$320$} & {$\okcov{0.925}$} & {$\bestw{0.517}$} & {$1.000$} & {$0.083$} \\
{$640$} & {$\okcov{0.926}$} & {$\secondw{0.519}$} & {$1.000$} & {$0.083$} \\
\bottomrule
\end{tabular}
\end{table}

{
\section{Additional Robustness Experiments}\label{app:robustness}

\textbf{Extended contamination range.}
We extend the controlled-$\delta$ ablation of Section~\ref{sec:exp_delta} to injected contamination up to $\delta=0.70$ (same setting: $p=200$, $|\calA|=150$, $100$ seeds). Table~\ref{tab:delta_extended} reports the Corrected procedure. Coverage stays at or above the nominal level across the entire range, and the intervals remain finite (feasibility $\approx 1$) even at $\delta=0.70$: at the synthetic calibration size ($n_{\mathrm{cal}}\approx120$) the corrected level {is clipped at $\alpha'=0.01$ for finite-width reporting}, still admitting a finite conformal quantile. {Without this clipping, rows where {$\alpha-g(\delta_{\mathrm{inject}},n)\le0$} would correspond to the formal infinite-interval correction.} The infinite-interval behavior seen on the real data (Table~\ref{tab:real}) is therefore a consequence of the small calibration set: at $n_{\mathrm{cal}}\approx40$ the nonzero contamination lowers the corrected level enough that too few scores remain for a finite quantile, and increasing $n_{\mathrm{cal}}$ at the same $\delta$ restores feasibility (Figure~\ref{fig:feasibility}).

\begin{table}[h]
\centering
\caption{Corrected procedure under extended injected contamination ($\delta$ up to $0.70$; $p=200$, $|\calA|=150$, $100$ seeds). {Finite-width rows use the clipped level $\alpha'\ge0.01$. Rows are contamination levels rather than competing methods, so widths are not ranked.}}
\label{tab:delta_extended}
\footnotesize
\begin{tabular}{lccc}
\toprule
$\delta_{\mathrm{inject}}$ & Coverage & Width & Feasibility \\
\midrule
$0.00$ & $\okcov{0.905}$ & $3.38$ & $100.0\%$ \\
$0.05$ & $\okcov{0.955}$ & $4.09$ & $100.0\%$ \\
$0.10$ & $\okcov{0.990}$ & $5.52$ & $99.9\%$ \\
$0.15$ & $\okcov{0.989}$ & $5.48$ & $99.9\%$ \\
$0.20$ & $\okcov{0.989}$ & $5.44$ & $99.9\%$ \\
$0.30$ & $\okcov{0.988}$ & $5.35$ & $99.9\%$ \\
$0.40$ & $\okcov{0.986}$ & $5.25$ & $99.9\%$ \\
$0.50$ & $\okcov{0.984}$ & $5.12$ & $99.9\%$ \\
$0.60$ & $\okcov{0.979}$ & $4.96$ & $99.9\%$ \\
$0.70$ & $\okcov{0.971}$ & $4.75$ & $99.9\%$ \\
\bottomrule
\end{tabular}
\end{table}

\textbf{Nonlinear mechanisms and residual scores.}
We replace the linear SEM $V=B^\top V+\varepsilon$ with additive nonlinear mechanisms $V_v=\sum_{u\in\mathrm{pa}(v)} B_{uv}\,\tanh(V_u)+\varepsilon_v$, and evaluate under two nonconformity scores: the synthetic score used in the main experiments and a genuine ridge-regression residual score $R_i^{(a)}=|V_i-\hat f_i(V_{-i})|$ with $\hat f_i$ fit on observational data ($p=200$, $|\calA|=150$, $20$ seeds). Table~\ref{tab:nonlinear} shows that {Corrected remains the most conservative method while Oracle, Estimated, and Pooled stay close to nominal coverage}, confirming that the method's behavior does not depend on the linear-Gaussian form of the synthetic generator.

\begin{table}[h]
\centering
\caption{Nonlinear SEM with additive $\tanh$ mechanisms, under the synthetic score and a real ridge-residual score ($p=200$, $|\calA|=150$, $20$ seeds). Coverage / width per method. {Ties at displayed precision are marked together.}}
\label{tab:nonlinear}
\footnotesize
\begin{tabular}{lcccc}
\toprule
& \multicolumn{2}{c}{Synthetic score} & \multicolumn{2}{c}{Ridge-residual score} \\
\cmidrule(lr){2-3}\cmidrule(lr){4-5}
Method & Coverage & Width & Coverage & Width \\
\midrule
Oracle & $\okcov{0.903}$ & $\secondw{3.34}$ & $\okcov{0.907}$ & $\bestw{4.35}$ \\
Estimated & $\okcov{0.901}$ & $\bestw{3.32}$ & $\okcov{0.906}$ & $\bestw{4.35}$ \\
Pooled & $\okcov{0.901}$ & $\bestw{3.32}$ & $\okcov{0.906}$ & $\bestw{4.35}$ \\
Corrected & $\okcov{0.920}$ & $3.59$ & $\okcov{0.924}$ & $4.39$ \\
\bottomrule
\end{tabular}
\end{table}
}

\section{Additional Discussion}\label{app:additional_discussion}
\subsection{{Future Work}}\label{app:future_work}

{\textbf{Calibration for descendant targets.}
The main theory focuses on test pairs with $Z_{a^\star,i}=0$, where interventions leaving target $i$ unchanged define a natural exchangeable stratum. Developing valid and efficient strata for descendant targets, where $Z_{a^\star,i}=1$, remains open.}

{\textbf{Weighted selective calibration.}
Algorithm~\ref{alg:local_icp} gives a distance estimator that can define kernel weights over interventions, but a finite-sample weighted-conformal guarantee with estimated causal distances remains to be proved.}

{\textbf{Deployable contamination bounds.}
Our correction uses an upper bound on the selected-set contamination fraction $\delta$; in the experiments this bound is supplied by true synthetic labels or real-data proxy labels. A fully deployable version would need data-driven high-confidence bounds on $\delta$, for example by combining descendant-learner uncertainty, held-out validation perturbations, or external biological annotations.}

{\textbf{Context-aware Mondrian strata.}
Real perturbation screens may violate exchangeability because of cell type, batch, tissue, dose, or donor effects. A natural extension is batch- or cell-type-stratified Mondrian calibration, potentially using multi-context resources such as scPerturb~\citep{peidli2024scperturb}.}

{\textbf{Combinatorial perturbations and active design.}
Our notation treats $a$ as a single intervention condition. Extending descendant learning and contamination control to combinatorial perturbations, and combining the method with active experimental design for causal discovery~\citep{squires2020active,eberhardt2005number}, are important next steps.}

{\textbf{Larger real-data validation.}
The real-data experiment uses proxy descendant labels and a small calibration set. Larger perturbation atlases with biologically validated causal annotations would allow a sharper test of both coverage and interval efficiency.}

\end{document}